\documentclass[11pt]{article}

\usepackage[
  shownumpages,               
  bgcolor={245,245,250},      
  braincolor={blue},          
  citingstyle=authoryear,     
  bibliostyle=plainnat,       
  bibfile=refs                
]{brainlab}

\usepackage{hyperref}       
\usepackage{url}            
\usepackage{booktabs}       
\usepackage{amsfonts}       
\usepackage{microtype}      
\usepackage{xcolor}         
\usepackage{mathtools}      

\newtheorem{lemma}[theorem]{Lemma}

\usepackage{multirow}
\usepackage{graphicx}
\usepackage{subcaption}

\usepackage{wrapfig}

\usepackage[textsize=tiny]{todonotes}

\setbrainmeta{
  title={Where Does Warm-Up Come From? Adaptive Scheduling for Norm-Constrained Optimizers \vspace{-3mm}},
  authors={
    Artem Riabinin\textsuperscript{1}, 
    Andrey Veprikov\textsuperscript{1, 2}, 
    Arman Bolatov\textsuperscript{3}, 
    Martin Takáč\textsuperscript{3}, 
    Aleksandr Beznosikov\textsuperscript{1, 2, 4}
  },
  affiliations={
    \textsuperscript{1}Basic Research of Artificial Intelligence Laboratory (BRAIn Lab)\\
    \textsuperscript{2}Federated Learning Problems Laboratory\\
    \textsuperscript{3}Mohamed bin Zayed University of Artificial Intelligence (MBZUAI)\\
    \textsuperscript{4}Innopolis University\\
  },
  abstract={
    \vspace{-5mm}
    We study adaptive learning rate scheduling for norm-constrained optimizers (e.g., Muon and Lion). 
    We introduce a generalized smoothness assumption under which local curvature decreases with the suboptimality gap and empirically verify that this behavior holds along optimization trajectories. 
    Under this assumption, we establish convergence guarantees under an appropriate choice of learning rate, for which warm-up followed by decay arises naturally from the proof rather than being imposed heuristically.\\
    Building on this theory, we develop a practical learning rate scheduler that relies only on standard hyperparameters and adapts the warm-up duration automatically at the beginning of training.
    We evaluate this method on large language model pretraining with LLaMA architectures and show that our adaptive warm-up selection consistently outperforms or at least matches the best manually tuned warm-up schedules across all considered setups, without additional hyperparameter search. Our source code is available at \url{https://github.com/brain-lab-research/llm-baselines/tree/warmup}.
  },
}

\begin{document}
\begin{mainpart}
\vspace{-10mm}
\section{Introduction}

In this paper, we consider the problem of training large language models (LLMs), which can be formulated as the following optimization problem:
\begin{equation}
\label{main_problem}
f^\star := \min_{x \in \mathcal{X}} f(x),
\end{equation}
where $f : \mathcal{X} \to \mathbb{R}$ denotes the loss of the model $x$ with a parameter space $\mathcal{X} := \{(W_1, \ldots, W_L) \mid W_i \in \mathbb{R}^{m_i \times n_i}\}$ representing the collection of $L$ model's layers.

Nowadays, classical method is to solve~\eqref{main_problem} using norm-constrained optimizers, where the update direction is given by a Linear Minimization Oracle (LMO) over a unit ball. This framework has emerged as a powerful family of methods for training deep networks, with recent successes including Kimi K2~\citep{kimiteam2025kimik2} and Moonlight~\citep{liu2025muonscalablellmtraining}. It unifies several modern optimizers, including normSGD~\cite{hazan2015beyond}, signSGD~\citep{bernstein2018signsgd}, Lion~\citep{chen2023lion}, and Muon~\citep{jordan2024muon}. Specifically, the LMO-based update rule is:
\begin{align}
\label{eq:update_rule_1}
&x^{t+1} = x^t + \eta^t \operatorname{LMO}(g^t), \quad \operatorname{LMO}(g^t) := \mathrm{arg~min}_{q \in \mathcal{X}: \|q\|=1} \langle g^t, q \rangle,
\end{align}
where $t$ is the optimization step, $\eta^t>0$ is the learning rate, $g^t$ is a gradient approximation (e.g., momentum), and $\| \cdot \|$ refers to an arbitrary, possibly non-Euclidean norm. This formulation arises naturally from minimizing a quadratic approximation of the loss function around the point $x^t$:
\begin{equation}
\label{eq:quadratic_approx}
f(x^t + \Delta x^t) \approx f(x^t) + \langle \nabla f(x^t), \Delta x^t \rangle + \frac{\lambda}{2} \|\Delta x^t\|^2.
\end{equation}
The update $\Delta x^t = x^{t+1} - x^t$ from \eqref{eq:update_rule_1} is the $\arg \min$ of~\eqref{eq:quadratic_approx} with respect to $\Delta x^t \in \mathcal{X}$ up to multiplication factors. Different choices of norms $\|\cdot\|$ yield different optimizers: the Euclidean norm recovers normSGD, the $\ell_1$ norm gives signSGD, and the spectral norm leads to Muon. For a detailed derivation of the resulting updates see Appendix~\ref{app:optimizers}.

The success of LMO-based methods depends not only on the appropriate choice of the norm in~\eqref{eq:update_rule_1}, but also on the proper selection of the learning rate $\eta^t$~\citep{goyal2017accurate}. In practice, empirically designed schedules are commonly used, such as linear warm-up followed by cosine decay~\citep{loshchilov2017sgdr}. In this work, we focus on the warm-up phase: starting with small learning rates and gradually increasing them before decay. Although warm-up has become nearly ubiquitous in practice~\citep{goyal2017accurate,vaswani2017attention,loshchilov2017sgdr}, its theoretical necessity has not been fully understood. 
Therefore, in this paper we address the following research questions:

\textit{(i) Can learning rate warm-up be theoretically justified for LMO-based optimizers, rather than being treated as a purely empirical heuristic?}

\textit{(ii) Can the warm-up duration be determined adaptively during training, eliminating the need for manual tuning?}

Guided by this research questions, we make the following contributions:

\begin{itemize}
\item We introduce a new generalized smoothness assumption where local curvature decreases with the suboptimality gap, and empirically verify that this behavior holds along optimization trajectories.

\item We provide a theoretical analysis establishing convergence guarantees for LMO-based optimizers under this assumption, where warm-up followed by decay emerges naturally from the proof rather than being imposed heuristically.

\item Based on the theory, we develop a practical learning rate scheduler with adaptive warm-up that relies only on standard hyperparameters and automatically determines the warm-up duration at the beginning of training.

\item We validate our approach on language model pretraining with LLaMA architectures, showing that the proposed adaptive warm-up matches or outperforms hand-tuned schedules across Muon, Lion, and normalized SGD without hyperparameter search.
\end{itemize}

\section{Related Work}

\textbf{Norm-constrained optimizers.}
Norm-constrained optimizers have recently attracted significant attention in deep learning. One of the most prominent examples of such optimizers is Muon~\citep{jordan2024muon}, which demonstrates strong performance for training deep neural networks \citep{liu2025muonscalablellmtraining}. Numerous studies have developed practical variants of Muon and related LMO-based algorithms for large-scale models, analyzing their empirical behavior under spectral or orthogonality constraints \citep{pethick2025scion,riabinin2025gluon,amsel2025polar,huang2025limuon,kovalev2025orthogonalization,he2025lowrank}.

\textbf{Adaptive and parameter-free optimizers.}
A related line of work focuses on designing optimizers that require minimal or no hyperparameter tuning. In this domain, Adam~\citep{kingma2015adam} and its variant AdamW~\citep{loshchilov2019decoupled} are adaptive coordinate-wise optimizers that have long been considered the default choice in deep learning. More recent advancements include D-Adaptation~\citep{defazio2023dadaptation}, which automatically estimates the distance to the solution to set learning rates, and Prodigy~\citep{mishchenko2024prodigy}, which improves on this with tighter distance estimates and faster adaptation. Furthermore, the recently introduced Schedule-Free methods~\citep{defazio2024schedulefree} eliminate the need for learning rate schedules entirely by maintaining iterate averages that converge without explicit decay. However, despite these advances, all discussed methods, even the Schedule-Free approaches, still rely on heuristically defined learning rate warm-up phases.

\textbf{Learning rate warm-up.}
The learning rate warm-up is a widely used heuristic to train deep neural networks, dating back at least to \citet{he2016deep}, which used a small constant learning rate during the initial training phase. The linear warm-up strategy, introduced by \citet{goyal2017accurate}, has since become standard for training ResNets~\citep{he2016deep} and transformers~\citep{vaswani2017attention}. Empirical studies have shown that warm-up enhances training stability to allow larger learning rates, reduces gradient variance, and improves model performance~\citep{gotmare2018closer,liu2019variance,kosson2024rotational}. From a geometric perspective, \citet{gilmer2021loss} and \citet{kalra2024warmup} observed that warm-up induces a sharpness reduction phase where the largest Hessian eigenvalue decreases, enabling larger learning rates in subsequent training.

\textbf{Generalized smoothness and warm-up theory.}
The standard $L$-smoothness assumption $\|\nabla f(x) - \nabla f(y)\| \le L\|x-y\|$ is insufficient to explain the necessity of warm-up, as it implies a uniform bound on curvature throughout the training landscape. Similarly, the $(L_0, L_1)$-smoothness condition introduced by \citet{zhang2020why}, which bounds the Hessian norm by $L_0 + L_1\|\nabla f(x)\|$, has not, to the best of our knowledge, yielded theoretical justifications for warm-up strategies. Recent works address this limitation by linking smoothness to the suboptimality gap, bounding the Hessian by $K_0 + K_{1} (f(x)-f^{\star})$~\citep{alimisis2025warmup} or $K_0 + K_{\rho} (f(x)-f^{\star})^{\rho}$~\citep{liu2025theoreticalwarmup}, where $f^\star$ is a target (perfect) loss for the problem \eqref{main_problem}. However, the theoretical results in these studies are limited to GD and SGD optimizers and derive schedules consisting solely of a warm-up phase without any decay. In contrast, our Assumption~\ref{ass:smoothness2} adopts this suboptimality-dependent framework to naturally derive warm-up and subsequent decay phases specifically for LMO-based methods.

\textbf{Target loss estimation.}
As was mentioned in the previous paragraph, all the theoretical frameworks about warmup rely on knowledge of the target value $f^\star$.
It commonly appears in adaptive stepsize methods, including Polyak-type step sizes \cite{polyak1963gradient, orabona2025new}, its stochastic variant \cite{loizou2021stochastic} and Polyak step method for mirror descent \cite{you2022minimizing}. 
A range of techniques has been proposed to estimate or adapt such target values in practice, including adaptive running lower bounds constructed from past function values \cite{hazan2019revisiting} and level-type schemes that replace $f^\star$ by evolving target levels in Polyak-type step sizes \cite{you2022two}.
In this work, we do not employ these mechanisms and instead fix a reasonable estimate of $f^\star$ for each setup.
In Section~\ref{sec:f_star_abl}, we perform an ablation over $f^\star$ and show that choosing $f^\star$ within a reasonable neighborhood of the optimal loss yields stable and robust behavior, consistently outperforming the best manually tuned warm-up schedules.

\section{Theoretical Analysis}

\subsection{Assumptions and Practical Motivation}
\label{sec:ass}

We begin our analysis with the set of assumptions required to study LMO-based optimizers for problem \eqref{main_problem}.

\begin{assumption}
\label{ass:star_convexity}
The function $f : \mathcal{X} \to \mathbb{R}$ is star-convex, i.e., the following inequality holds:
\[
f(\beta x^{\star} + (1 - \beta)x) \leq \beta f(x^{\star}) + (1 - \beta)f(x),
\]
for all $x \in \mathcal{X}$ and $\beta \in [0, 1]$, where $x^{\star}$ is a global minimizer of $f$, i.e. $f(x^\star) = f^\star$.
\end{assumption}

Unlike classical convexity, which requires the inequality to hold for every pair of points in $\mathcal{X}$, star-convexity only enforces convexity with respect to the optimum. This reflects a setting in which the loss landscape may be highly nonconvex globally, yet becomes progressively well behaved along trajectories that approach $x^\star$.
Such star-shaped conditions are now standard in modern theoretical analyses of deep neural networks~\citep{zhou2019sgd}. In particular, recent works analyzing LMO-based optimizers show that star-convexity, together with classical Lipschitz smoothness, is sufficient to guarantee convergence with a constant learning rate~\citep{pethick2025scion, kovalev2025orthogonalization}. However, since these works rely on the standard smoothness assumption, they fail to explain the emergence of learning rate warm-up. This motivates Assumption~\ref{ass:smoothness2}, which we introduce next.
\begin{assumption}
\label{ass:smoothness2}
The function $f : \mathcal{X} \to \mathbb{R}$ is $(\rho, K_0, K_1, K_\rho)$-smooth, i.e., there exist $K_0, K_1, K_\rho \ge 0$ and $\rho>0$ such that for all $x, y \in \mathcal{X}$ it holds that:
\begin{gather*}
    \|\nabla f(x) - \nabla f(y)\|_{\star} \le \mathcal{K}(x) \|x - y\|, \\[1ex]
    \mathcal{K}(x) := K_0 + K_1 \left( f(x) - f^{\star} \right) + K_\rho \left( f(x) - f^{\star} \right)^\rho,
\end{gather*}
where $\| \cdot \|_{\star}$ is the conjugate norm for $\| \cdot \|$ used in the update rule \eqref{eq:update_rule_1}.
\end{assumption}

This condition strengthens the classical smoothness model in a manner that is aligned with the geometry of deep learning. It recovers the $(\rho, L_0,L_{\rho})$-smoothness (see Section~\ref{sec:connection_with_smoothness} for details). When $K_{\rho}=0$ and $\|\cdot\|:=\|\cdot\|_2$, Assumption~\ref{ass:smoothness2} reduces to the version of assumption from \cite{alimisis2025warmup}, which corresponds to a bound of the type:
\[
\|\nabla f(x)-\nabla f(y)\| \le \left(K_0+K_{1}(f(x)-f^\star)\right)\|x-y\|.
\]

However, our findings (both theoretical and empirical) indicate that the regime $K_\rho>0$ is not merely an analytical convenience but necessary for accurately modeling the behavior of norm-constrained optimizers in deep learning. To demonstrate this, we evaluate the empirical stability ratio
\[
\mathcal{K}^t =
\frac{\|\nabla f(x^{t+1}) - \nabla f(x^{t})\|_\star}{\|x^{t+1}-x^t\|}
\]
as a function of $\Delta^t := f(x^t)-f^\star$
along optimization trajectories of several LMO-based methods: Muon, normSGD, and Lion. The experiments are carried out in a large-scale pretraining regime (see Section~\ref{sec:exp} for details). Figure~\ref{fig:smoothness-lion} reports the observed dependence for Lion. Analogous curves for the remaining methods are provided in Appendix~\ref{app:smooth}.

\begin{wrapfigure}[20]{r}{0.5\textwidth}
    \centering
    \includegraphics[width=0.5\textwidth]{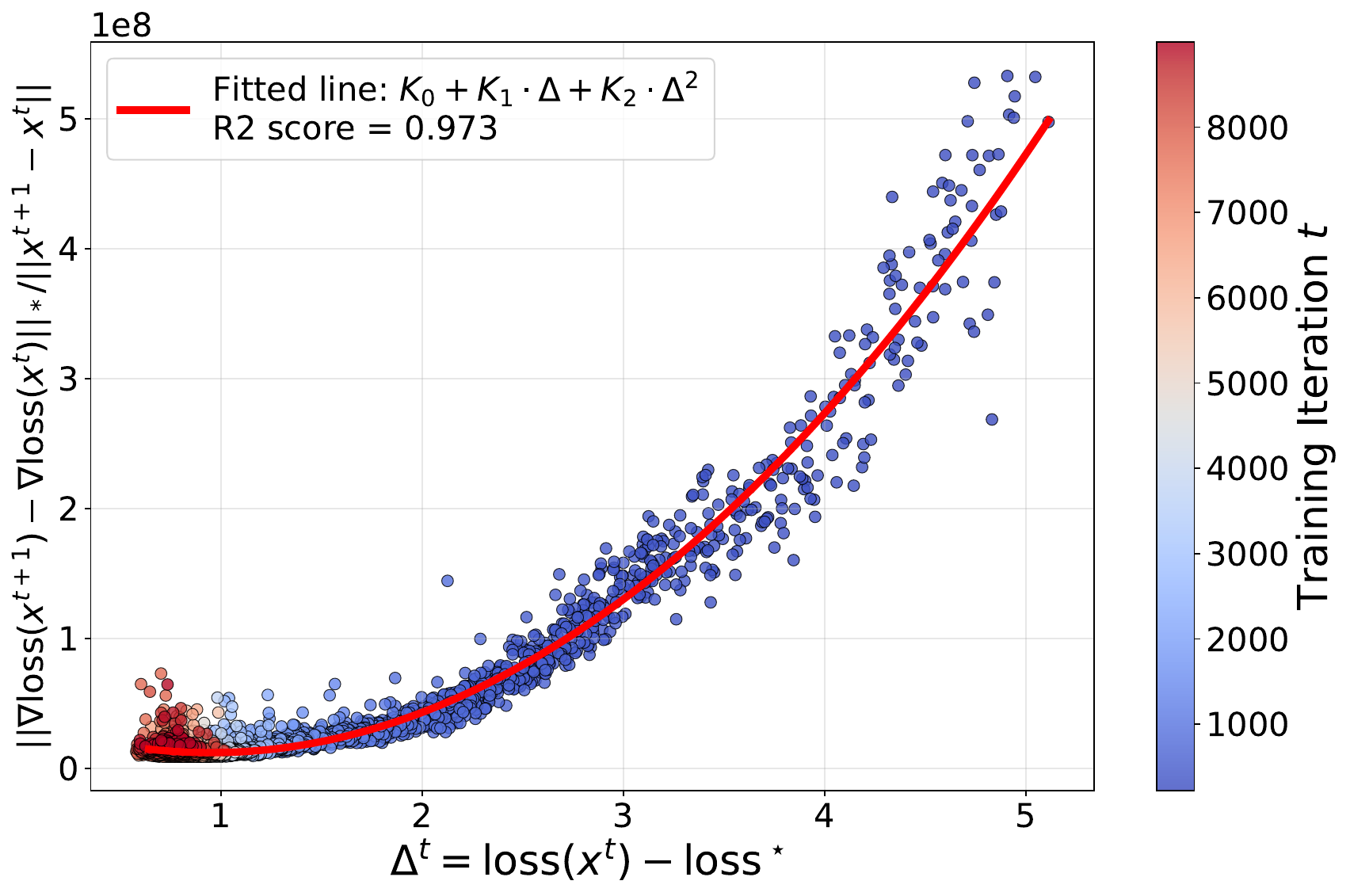}
    \caption{Empirical smoothness ratio $\mathcal{K}^t$ versus suboptimality gap $\Delta^t$ for Lion on large-scale pretraining. The trajectory is well-fitted by a quadratic dependence, indicating $K_\rho > 0$. All hyperparameter setup is provided in Appendix \ref{appendix:hyper_ass}.}
    \label{fig:smoothness-lion}
\end{wrapfigure}

In all cases we observe a clear parabolic trend linking $\mathcal{K}^t$ and $\Delta_t$, consistent with a dominant $K_\rho(f(x)-f^\star)^\rho$ term. This behavior persists across optimizers, and cannot be captured by models with $K_\rho=0$.

We note that earlier empirical studies generated smoothness-versus-progress plots and fitted them using linear functions~\cite{alimisis2025warmup}. However, those investigations examined only the earliest portion of training, under extremely small learning rates, where the suboptimality gap remains large (e.g., $\Delta \geq 6$). In that narrow range, a linear fit indeed appears adequate. Yet, over the full training trajectory, where $\Delta$ decreases by orders of magnitude, a purely linear model breaks down, and the quadratic component becomes essential.
In recent warm-up studies, \citet{liu2025theoreticalwarmup} also observed that linear functions may not fully capture these empirical trends. While their analysis effectively identifies this non-linearity, it primarily focuses on diagnosing the limitations of linear models. Our work builds upon these observations by showing that a quadratic model provides a more precise functional form to describe these dynamics across the entire trajectory, further supporting Assumption~\ref{ass:smoothness2} as a representative model for deep learning.

Assumption~\ref{ass:smoothness2} is stated globally as a sufficient condition for the proof. In practice it suffices that the structure holds along the LMO trajectory, which Figure~\ref{fig:smoothness-lion} validates directly.
The ratio $\mathcal{K}^t$ differs from the top Hessian eigenvalue studied in Edge of Stability~\citep{cohen2021gradient}: both can coexist, and no EoS analog is known for LMO methods such as Muon. Appendix~\ref{app:smooth} details the estimation procedure, provides validation for all three optimizers, and discusses the EoS relation in Section~\ref{app:eos}.

Finally, we require the generated sequence of iterates to remain within a bounded domain.

\begin{assumption}
\label{ass:boundedness}
The iterates $x^t$ generated by \eqref{eq:update_rule_1} are bounded, i.e., there exists $D>0$ such that for all $t \ge 0$ it holds that
$
\|x^t - x^\star\| \le D.
$\end{assumption}

Assumption~\ref{ass:boundedness} is standard in LMO-type methods~\citep{kovalev2025orthogonalization}, ensuring that the smoothness constants in Assumption~\ref{ass:smoothness2} remain meaningful along the trajectory. Importantly, this assumption is not required in two common cases: (i)~when weight decay is used (Section~\ref{sec:weight_decay}), (ii)~for the Euclidean norm in the step \eqref{eq:update_rule_1}, boundedness follows automatically (see Lemma~\ref{lem:boundedness} in Appendix).

\subsection{Deterministic Case without Weight Decay}

The next theorem provides, to the best of our knowledge, the first convergence analysis under Assumption~\ref{ass:smoothness2} to theoretically predict a learning rate schedule consisting of a warm-up phase followed by decay.

\begin{theorem} 
\label{thm:1}
Suppose Assumptions \ref{ass:star_convexity}, \ref{ass:smoothness2}, and \ref{ass:boundedness} hold and the iterates $x^t$ are generated by \eqref{eq:update_rule_1} with $g^t = \nabla f(x^t)$.

If we use the learning rate scheduler (with warm-up followed by decay for $\rho>1$)
\[
\eta^{t}=\frac{\Delta^{t}}{D \cdot \mathcal{K}(x^t)} = \frac{\Delta^{t}}{D \cdot \left( K_0 + K_1 \Delta^t + K_\rho (\Delta^t)^\rho \right)},
\]
then $\Delta^{t+1}\le\Delta^{t}$ and $\mathcal{K}(x^{t+1}) \leq \mathcal{K}(x^t)$ for all $t$, and
\[
\Delta^T \le \frac{2D^2\sum_{t=0}^{T-1}\mathcal{K}(x^t)}{T^2} = \mathcal{O}\left( \frac{1}{T} \right),
\]
where $\Delta^t:=f(x^t)-f^{\star}$.
\end{theorem}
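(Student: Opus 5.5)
The argument rests on a one-step descent inequality that uses the smoothness constant at the current point. Assumption~\ref{ass:smoothness2} gives a gradient Lipschitz bound with constant $\mathcal{K}(x^t)$ between $x^t$ and any $y$, so integrating along the segment from $x^t$ to $x^{t+1}$ yields
\[
f(x^{t+1}) \le f(x^t) + \langle \nabla f(x^t), x^{t+1}-x^t\rangle + \tfrac{\mathcal{K}(x^t)}{2}\|x^{t+1}-x^t\|^2 .
\]
This step needs the asymmetry of the assumption, namely that $\mathcal{K}$ is evaluated at $x$, the first argument. We then insert $x^{t+1}-x^t=\eta^t\operatorname{LMO}(\nabla f(x^t))$, using $\|\operatorname{LMO}\|=1$ and $\langle \nabla f(x^t),\operatorname{LMO}(\nabla f(x^t))\rangle = -\|\nabla f(x^t)\|_\star$. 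This gives
\[
\Delta^{t+1}\le \Delta^t - \eta^t\|\nabla f(x^t)\|_\star + \tfrac{\mathcal{K}(x^t)}{2}(\eta^t)^2 .
\]

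Next we lower bound the dual norm of the gradient. Star-convexity (Assumption~\ref{ass:star_convexity}) for a differentiable $f$ gives $\langle \nabla f(x^t), x^t-x^\star\rangle \ge \Delta^t$; this follows by dividing the defining inequality by $\beta$ and letting $\beta\to 0$. Combining this with Assumption~\ref{ass:boundedness} yields $\|\nabla f(x^t)\|_\star \ge \Delta^t/D$. Substituting, we get $\Delta^{t+1}\le \Delta^t - \eta^t\Delta^t/D + \mathcal{K}(x^t)(\eta^t)^2/2$. The right-hand side is minimized at $\eta^t=\Delta^t/(D\mathcal{K}(x^t))$, which is exactly the prescribed schedule, and there the bound becomes
\[
\Delta^{t+1}\le \Delta^t - \frac{(\Delta^t)^2}{2D^2\mathcal{K}(x^t)} .
\]

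The two monotonicity claims follow at once. The inequality gives $\Delta^{t+1}\le\Delta^t$, and since $\mathcal{K}$ is a nondecreasing function of $\Delta\ge 0$ (all $K$'s are nonnegative and $\rho>0$), it also gives $\mathcal{K}(x^{t+1})\le\mathcal{K}(x^t)$. The warm-up/decay remark concerns the map $\Delta\mapsto \Delta/(K_0+K_1\Delta+K_\rho\Delta^\rho)$. For $\rho>1$ this map increases and then decreases in $\Delta$, so as $\Delta^t$ decreases the learning rate first rises and then falls. This is a side comment and needs no proof beyond a derivative sign check.

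The main obstacle is the rate, because the recursion has a time-varying coefficient $c_t := 1/(2D^2\mathcal{K}(x^t))$. I will use the standard reciprocal trick. Dividing $\Delta^{t+1}\le \Delta^t - c_t(\Delta^t)^2$ by $\Delta^t\Delta^{t+1}$ and using $\Delta^{t+1}\le\Delta^t$ gives
\[
\frac{1}{\Delta^{t+1}}\ge \frac{1}{\Delta^t}+c_t\frac{\Delta^t}{\Delta^{t+1}}\ge\frac{1}{\Delta^t}+c_t .
\]
If some $\Delta^t$ equals zero the claim is trivial, so we may assume all are positive. Summing over $t$ yields $1/\Delta^T\ge\sum_{t<T}c_t$. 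The Cauchy--Schwarz (or harmonic–arithmetic mean) inequality then gives $\sum_t 1/\mathcal{K}(x^t)\ge T^2/\sum_t\mathcal{K}(x^t)$. Hence $\Delta^T\le 2D^2\sum_t\mathcal{K}(x^t)/T^2$. Finally, monotonicity gives $\mathcal{K}(x^t)\le\mathcal{K}(x^0)$, so the sum is at most $T\mathcal{K}(x^0)$ and the rate is $\mathcal{O}(1/T)$.
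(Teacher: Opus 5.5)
Your proposal is correct and follows the paper's proof essentially step for step: the one-step descent inequality with $\mathcal{K}(x^t)$, the star-convexity-plus-boundedness bound $\|\nabla f(x^t)\|_\star \ge \Delta^t/D$, minimizing the resulting quadratic in $\eta^t$, the reciprocal telescoping, and Cauchy--Schwarz. You also spell out a few details the paper leaves implicit: the descent lemma derived from the asymmetric assumption, the trivial case $\Delta^t=0$, monotonicity of $\mathcal{K}$ in $\Delta$ giving $\mathcal{K}(x^{t+1})\le\mathcal{K}(x^t)$, and $\mathcal{K}(x^t)\le\mathcal{K}(x^0)$ for the $\mathcal{O}(1/T)$ rate.
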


\textbf{Discussion of Theorem~\ref{thm:1}.}

\begin{wrapfigure}[17]{r}{0.45\textwidth}
    \centering
    \includegraphics[width=0.45\textwidth]{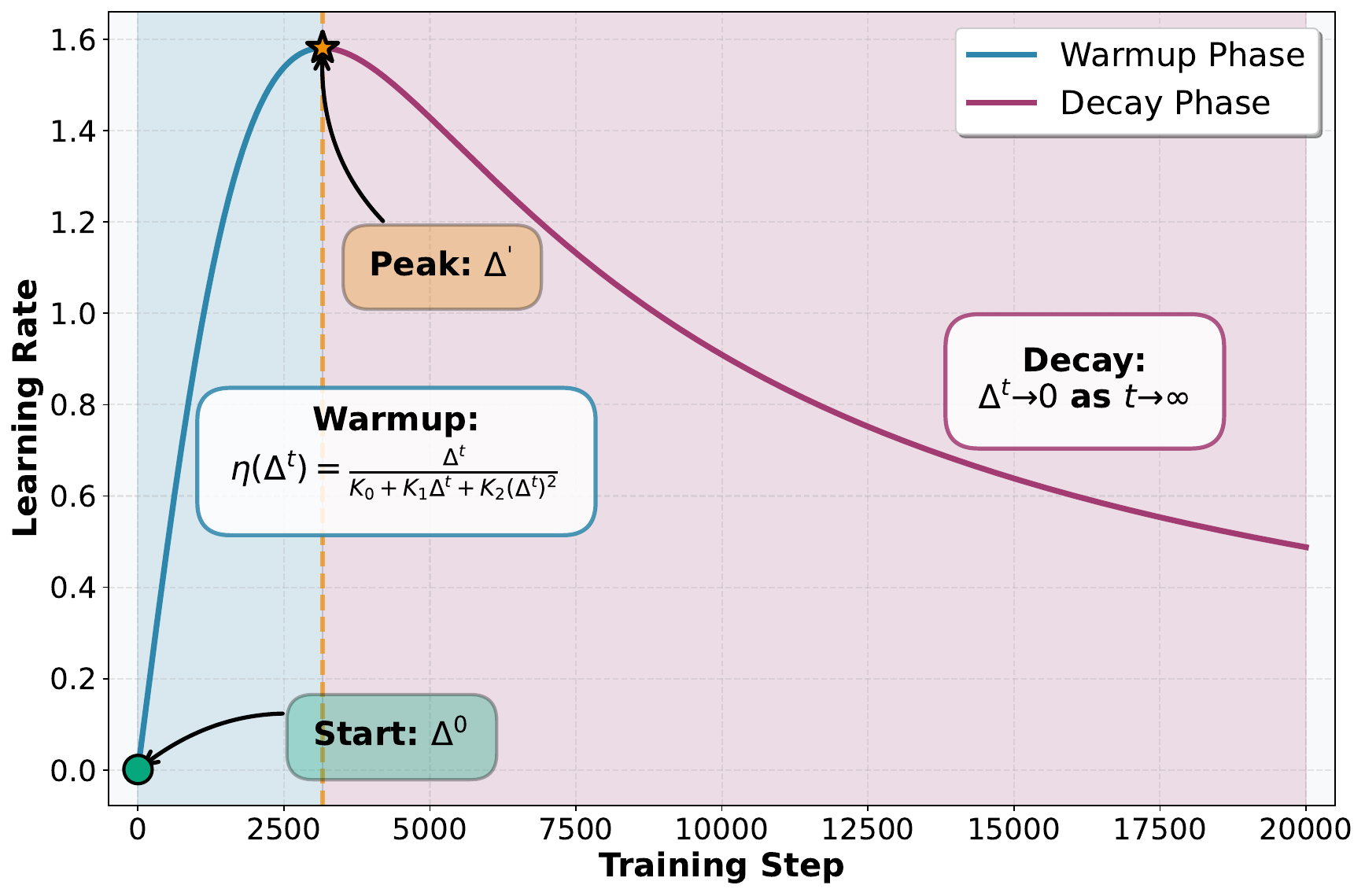}
    \caption{Asymptotic behavior of the learning rate $\eta^t$ for parameters $D=1$, $\rho=2$, $K_0=10^{-4}$, $K_1=0$ and $K_\rho=10^3$ with $\Delta^t = 1 / t$. The trajectory explicitly shows the theoretical warm-up phase followed by decay.}
    \label{fig:eta_behavior}
\end{wrapfigure}

$\bullet$ \textbf{Explanation of warm-up behavior.}
Since by Theorem~\ref{thm:1} $\Delta^t$ decreases monotonically, the learning rate $\eta^t$ exhibits a warm-up phase followed by a decay for $\rho > 1$. The transition occurs at the point $\Delta'$ obtained by maximizing $\eta(\Delta^t)$: $\Delta' := \left( \frac{K_0}{K_\rho (\rho - 1)} \right)^{1/\rho}$.
Consequently, $\eta^t$ increases while $\Delta^t > \Delta'$ and decreases thereafter. If we assume sublinear convergence of the objective function, specifically $\Delta^t \sim 1/t$, the asymptotic behavior of the learning rate is illustrated in Figure~\ref{fig:eta_behavior}.

$\bullet$ \textbf{Comparison with decay-only schedule.}
Consider a simplified scheduler where the adaptive term $\mathcal{K}(x^t)$ is replaced by its initial constant value $\mathcal{K}^0:=K_0 + K_1 \Delta^0 + K_\rho (\Delta^0)^\rho$. In this scenario, the learning rate becomes $\eta^{t}=\frac{\Delta^{t}}{D \cdot \mathcal{K}^0}$ and the convergence bound simplifies to
\[
\Delta^T \le \frac{2D^2 T \cdot \mathcal{K}^0}{T^2} = \frac{2D^2 \mathcal{K}^0}{T} = \mathcal{O}\left( \frac{1}{T} \right).
\]
Although the asymptotic rate $\mathcal{O}(1/T)$ remains unchanged, the adaptive schedule in Theorem~\ref{thm:1} provides a tighter bound. To quantify the improvement, observe that if $\Delta^t \sim 1/t$, then $\mathcal{K}(x^t) \sim K_0 + K_1/t + K_\rho /t^{\rho}$. For $\rho > 1$, the dominant term is $K_0$, yielding $\sum_{t=1}^{T} \mathcal{K}(x^t) \sim K_0 T$. Thus, the adaptive bound $\Delta^T \lesssim D^2 K_0 / T$ matches the decay-only bound asymptotically. However, during the transient phase (small $t$), the higher-order terms $K_1/t$ and $K_\rho/t^{\rho}$ are significant, and the adaptive schedule benefits from using larger learning rates.

\subsection{Deterministic Case with Weight Decay}
\label{sec:weight_decay}

We now extend our framework by incorporating weight decay regularization with the coefficient $\lambda > 0$ into \eqref{eq:update_rule_1}, resulting in the following update rule:
\begin{equation}
\label{eq:update_rule_2}
x^{t+1} = (1-\lambda \eta^t)x^t + \eta^t \operatorname{LMO}(g^t).
\end{equation}
This form of weight decay is a standard technique for improving generalization in large-scale neural networks \citep{loshchilov2019decoupled}. The following Theorem~\ref{thm:2} provides the convergence guarantees for the step \eqref{eq:update_rule_2}.

\begin{theorem} 
\label{thm:2}
Suppose Assumptions \ref{ass:star_convexity} and \ref{ass:smoothness2} hold with $\rho>1$ and the iterates $x^t$ are generated by \eqref{eq:update_rule_2} with $g^t = \nabla f(x^t)$, 
$\lambda \in \big(0, \frac{1}{\max\left(\|x^0\|, \|x^{\star}\|, 1/\lambda_{\max}\right)} \big]$,
where $$\lambda_{\max}=\left[ 8 \left( \rho \left( \frac{K_0}{\rho - 1} \right)^{\frac{\rho - 1}{\rho}} K_{\rho}^{\frac{1}{\rho}} + K_1 \right) \right]^{1/2}.$$

If we use the learning rate scheduler (with warm-up followed by decay)
\[
\eta^{t}=\frac{\lambda \Delta^{t}}{8 \mathcal{K}(x^t)}= \frac{\lambda \Delta^{t}}{8 \left( K_0 + K_1 \Delta^t + K_\rho (\Delta^t)^\rho \right)},
\]
then $\Delta^{t+1}\le\Delta^{t}$ and $\mathcal{K}(x^{t+1}) \leq \mathcal{K}(x^t)$ for all $t$, and
\[
\Delta^T \le \frac{16\sum_{t=0}^{T-1}\mathcal{K}(x^t)}{\lambda^2T^2} = \mathcal{O}\left( \frac{1}{T} \right).
\]
\end{theorem}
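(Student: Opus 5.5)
The plan follows the proof of Theorem~\ref{thm:1}, with the bounded-domain Assumption~\ref{ass:boundedness} replaced by an invariant that weight decay keeps the iterates in the ball $\{\|x\|\le 1/\lambda\}$. \textbf{Step 1 (invariance of the ball).} I will show by induction that $\|x^t\|\le 1/\lambda$ for all $t$. The base case is $\|x^0\|\le 1/\lambda$, which is part of the choice of $\lambda$. For the inductive step, provided $\lambda\eta^t\le 1$,
\[
\|x^{t+1}\|\le (1-\lambda\eta^t)\|x^t\|+\eta^t\le (1-\lambda\eta^t)/\lambda+\eta^t=1/\lambda .
\]
To verify $\lambda\eta^t\le 1$, note that $\eta^t=\lambda\Delta^t/(8\mathcal{K}(x^t))$ and $\Delta/\mathcal{K}\le \Delta/(K_0+K_\rho\Delta^\rho)\le \max_\Delta \eta(\Delta)\cdot 8/\lambda$. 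This maximum is attained at $\Delta'$ from the discussion of Theorem~\ref{thm:1}. The quantity $\rho(K_0/(\rho-1))^{(\rho-1)/\rho}K_\rho^{1/\rho}$ appearing in $\lambda_{\max}$ is exactly $\min_\Delta (K_0+K_\rho\Delta^\rho)/\Delta$ (weighted AM--GM), and adding $K_1$ gives $\min_\Delta \mathcal{K}/\Delta$. Hence $\Delta^t/\mathcal{K}(x^t)\le 8/\lambda_{\max}^2$, so $\lambda\eta^t\le \lambda^2/\lambda_{\max}^2\le 1$ because $\lambda\le\lambda_{\max}$. Since also $\|x^\star\|\le 1/\lambda$, we get $\|x^t-x^\star\|\le 2/\lambda$ and $\|x^{t+1}-x^t\|\le \eta^t(\lambda\|x^t\|+1)\le 2\eta^t$.

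\textbf{Step 2 (one-step descent).} I will use the descent inequality implied by Assumption~\ref{ass:smoothness2} at the base point $x^t$, namely $f(x^{t+1})\le f(x^t)+\langle\nabla f(x^t),x^{t+1}-x^t\rangle+\frac{\mathcal{K}(x^t)}{2}\|x^{t+1}-x^t\|^2$. This holds because the bound on $\|\nabla f(x^t)-\nabla f(y)\|_\star$ uses the constant $\mathcal{K}(x^t)$ for every $y$, so the usual integral argument along the segment applies. Write $x^{t+1}-x^t=\eta^t(\mathrm{LMO}(g^t)-\lambda x^t)=\lambda\eta^t(u-x^t)$ with $u=\mathrm{LMO}(g^t)/\lambda$. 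The point $u$ minimizes $\langle\nabla f(x^t),\cdot\rangle$ over the ball of radius $1/\lambda$, which contains $x^\star$. Therefore $\langle\nabla f(x^t),u-x^t\rangle\le\langle\nabla f(x^t),x^\star-x^t\rangle\le -\Delta^t$, where the last step is star-convexity (Assumption~\ref{ass:star_convexity}) in its first-order form, obtained by letting $\beta\to0$. This gives
\[
\Delta^{t+1}\le \Delta^t-\lambda\eta^t\Delta^t+2\mathcal{K}(x^t)(\eta^t)^2 .
\]
Substituting $\eta^t=\lambda\Delta^t/(8\mathcal{K}(x^t))$ yields
\[
\Delta^{t+1}\le \Delta^t-\frac{\lambda^2(\Delta^t)^2}{8\mathcal{K}}+\frac{\lambda^2(\Delta^t)^2}{32\mathcal{K}}\le \Delta^t-\frac{\lambda^2(\Delta^t)^2}{16\,\mathcal{K}(x^t)} .
\]
Monotonicity $\Delta^{t+1}\le\Delta^t$ follows immediately. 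Then $\mathcal{K}(x^{t+1})\le\mathcal{K}(x^t)$ follows because $\mathcal{K}$ is nondecreasing in $\Delta$. Step~1 and Step~2 are interleaved in a single induction, since Step~1 at time $t$ only needs the bound on $\eta^t$, which holds for every $\Delta$.

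\textbf{Step 3 (rate).} Divide the recursion by $\Delta^t\Delta^{t+1}$ and use $\Delta^t/\Delta^{t+1}\ge1$ to obtain $1/\Delta^{t+1}-1/\Delta^t\ge \lambda^2/(16\mathcal{K}(x^t))$. Summing over $t$ gives $1/\Delta^T\ge \frac{\lambda^2}{16}\sum_{t=0}^{T-1}1/\mathcal{K}(x^t)$. By Cauchy--Schwarz, $\sum 1/\mathcal{K}\ge T^2/\sum\mathcal{K}$, which gives the claimed bound $\Delta^T\le 16\sum_t\mathcal{K}(x^t)/(\lambda^2T^2)$. This is $\mathcal{O}(1/T)$ because $\mathcal{K}(x^t)\le\mathcal{K}(x^0)$. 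The warm-up-then-decay shape of $\eta^t$ follows from monotonicity of $\Delta^t$ together with $\rho>1$, exactly as in the discussion after Theorem~\ref{thm:1}.

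\textbf{Main obstacle.} The delicate step is Step~1: confirming that $\lambda_{\max}$ is exactly what is needed for $\lambda\eta^t\le1$, i.e.\ that the AM--GM constant matches $\min_\Delta\mathcal{K}/\Delta$, and checking the direction of the inequality against the factor $8$. If the constants turn out not to close, I would instead bound $\Delta/\mathcal{K}\le \Delta/(K_1\Delta)$ or $\Delta/\mathcal{K}\le\Delta/(K_0+K_\rho\Delta^\rho)$ separately and keep the weaker of the two.
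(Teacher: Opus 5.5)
Your proposal is correct and follows the paper's skeleton. The shared steps are: the induction $\|x^t\|\le 1/\lambda$ driven by $\lambda\eta^t\le\lambda^2/\lambda_{\max}^2\le 1$ (your AM--GM identification of $\lambda_{\max}^2/8$ with $\min_\Delta\mathcal{K}/\Delta$ is the paper's computation of $\max_u u/(8\mathcal{K}(u))$); the descent inequality at $x^t$ with $\|x^{t+1}-x^t\|\le 2\eta^t$; the fact that $\mathrm{LMO}(g^t)/\lambda$ beats $x^\star$ over the ball of radius $1/\lambda$; and telescoping with Cauchy--Schwarz.

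The one real difference is how you bound $\lambda\eta^t\langle\nabla f(x^t),x^\star-x^t\rangle$. You apply first-order star-convexity (Corollary~\ref{cor:star_convex_gradient}) directly. The paper instead passes through $x=(1-\lambda\eta^t)x^t+\lambda\eta^t x^\star$, using the quadratic lower bound from Assumption~\ref{ass:smoothness2} together with zeroth-order star-convexity. That costs an extra $2\mathcal{K}^t(\eta^t)^2$, so its recursion carries $4\mathcal{K}^t(\eta^t)^2$, for which $\eta^t=\lambda\Delta^t/(8\mathcal{K}^t)$ is exactly the minimizer.

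Your shortcut is simpler and gives the sharper recursion with $2\mathcal{K}^t(\eta^t)^2$. With the prescribed step it yields a decrease of $3\lambda^2(\Delta^t)^2/(32\mathcal{K}^t)$, more than the required $\lambda^2(\Delta^t)^2/(16\mathcal{K}^t)$. With the step $\lambda\Delta^t/(4\mathcal{K}^t)$ it would even improve the constant $16$ to $8$, provided $\lambda\le\lambda_{\max}/\sqrt{2}$ so that $\lambda\eta^t\le 1$ still holds.

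One small slip in Step~1: the second inequality in $\Delta/\mathcal{K}\le\Delta/(K_0+K_\rho\Delta^\rho)\le\max_\Delta\eta(\Delta)\cdot 8/\lambda$ is false in general when $K_1>0$. Drop the middle term and use $\Delta/\mathcal{K}\le 1/\min_\Delta(\mathcal{K}/\Delta)=8/\lambda_{\max}^2$ directly. This closes the constants exactly, so the fallback in your last paragraph is not needed.
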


\textbf{Discussion of Theorem~\ref{thm:2}.}

\begin{itemize}
\item \textbf{Relaxation of Boundedness Assumption.}
Unlike Theorem~\ref{thm:1}, we do not require Assumption~\ref{ass:boundedness} (boundedness of the iterates), as the regularization term $-\lambda \eta^t x^t$ in the update rule \eqref{eq:update_rule_2} implicitly ensures that the iterates remain bounded.

\item \textbf{Comparison with Theorem~\ref{thm:1}.} Note that all implications of Theorem~\ref{thm:1} are satisfied in the context of Theorem~\ref{thm:2} by setting $D \sim 1/\lambda$.
\end{itemize}

The deterministic analysis provides the foundation for understanding optimal learning rate schedules. However, as deep learning relies on stochastic gradient estimates in practice, we now extend our framework to address it.

\section{Stochastic Extensions}

Let us now consider the following stochastic optimization problem:
\begin{equation}
\label{eq:stoch_opt_problem}
\min_{x \in \mathcal{X}} \left\{ f(x) := \mathbb{E}_{\xi \sim \mathcal{D}} [f_{\xi}(x)] \right\},
\end{equation}
where $f_{\xi}: \mathcal{X} \to \mathbb{R}$ represents the loss of the model parameterized by $x$, associated with a training data point $\xi$ sampled from the probability distribution $\mathcal{D}$.

We consider a variant of the update rule \eqref{eq:update_rule_1} for minimizing \eqref{eq:stoch_opt_problem}, by setting $g^t = \nabla f_{\xi^t}(x^t)$ and choosing the Euclidean norm $\|\cdot\| := \|\cdot\|_2$ in the LMO step:
\begin{equation}
\label{eq:update_rule_3}
x^{t+1}=x^{t}-\eta^t\frac{\nabla f_{\xi^t} (x^t)}{\|\nabla f_{\xi^t} (x^t)\|}.
\end{equation}

For the stochastic setting, we can use a weaker variant of Assumption \ref{ass:smoothness2} that only requires smoothness relative to the optimum $x^{\star}$.


\begin{assumption}
\label{ass:smoothness2_weak}
The functions $f_\xi : \mathcal{X} \to \mathbb{R}$ are $(\rho, K_0, K_1, K_\rho)$-smooth at the global minimizer $x^{\star}$ of $f$ for the Euclidean norm, i.e., there exist $K_0, K_1, K_\rho \ge 0$, $\rho>0$ such that for all $x \in \mathcal{X}$ it holds that:
\begin{gather*}
    \|\nabla f_\xi(x)\|_{2} \le \mathcal{K}_\xi(x) \|x - x^{\star}\|_2, \\[1ex]
    \mathcal{K}_\xi(x) := K_0 + K_1 \left( f_\xi(x) - f^{\star}_\xi \right) + K_\rho \left( f_\xi(x) - f^{\star}_\xi \right)^\rho.
\end{gather*}
\end{assumption}

Additionally, to study convergence in the stochastic setting similarly to \citet{alimisis2025warmup}, we require an interpolation condition, which is typically satisfied for over-parameterized networks \citep{pmlr-v80-ma18a}.

\begin{assumption}
\label{ass:overparam}
The optimization problem \eqref{eq:stoch_opt_problem} satisfies the interpolation regime, i.e., for the global minimizer $x^{\star}$ of $f$, the following holds almost surely for $\xi \sim \mathcal{D}$
\[
f_{\xi}(x^{\star}) = f_{\xi}^{\star},
\]
where $f_{\xi}^{\star} := \inf_{x \in \mathcal{X}} f_{\xi}(x) > -\infty$.
\end{assumption}

Under the established assumptions, we characterize the convergence of \eqref{eq:update_rule_3} in Theorem~\ref{thm:3}.

\begin{theorem} 
\label{thm:3}
Suppose Assumptions \ref{ass:star_convexity} (for all $f_\xi$), \ref{ass:smoothness2_weak} and \ref{ass:overparam} hold. 
Consider the iterates $x^t$ generated by \eqref{eq:update_rule_3} with
\[
\eta^{t}=\frac{\Delta_{\xi}^{t}}{D \cdot \mathcal{K}_\xi(x^t)},
\]
where $D := \|x^0 - x^\star\|$, $\Delta^t_{\xi}:=f_{\xi^t}(x^t) - f_{\xi^t}^{\star}$, and $\mathcal{K}_\xi(x^t) := K_0 + K_1 \Delta_{\xi}^t + K_\rho (\Delta_{\xi}^{t})^{\rho}$, then
\[
\frac{1}{T}\sum_{t=0}^{T-1}\mathbb{E}[\Delta_{\xi}^t]
\le \frac{D^2}{T}\sqrt{\mathbb{E}\left[\sum_{t=0}^{T-1} (\mathcal{K}_\xi(x^t))^2\right]}.
\]

If there exists $M>0$ such that $\Delta_{\xi}^t \le M$ almost surely for all $t$ and $\xi \sim \mathcal{D}$, then
\[
\frac{1}{T}\sum_{t=0}^{T-1}\mathbb{E}[\Delta_{\xi}^t]
\le \frac{D^2 \bar{\mathcal{K}}}{\sqrt{T}} = \mathcal{O}\left(\frac{1}{\sqrt{T}}\right),
\]
where $\bar{\mathcal{K}} := K_0 + K_1 M + K_\rho M^{\rho}$.
\end{theorem}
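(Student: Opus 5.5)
The plan is a Polyak-type distance argument for normalized SGD. Write $r^t := \|x^t - x^\star\|_2$ and $g := \nabla f_{\xi^t}(x^t)$. Expanding the update \eqref{eq:update_rule_3} gives
\[
(r^{t+1})^2 = (r^t)^2 - \frac{2\eta^t}{\|g\|}\langle g, x^t - x^\star\rangle + (\eta^t)^2 .
\]

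\textbf{Step 1: lower bound on the cross term.} Star-convexity of $f_{\xi^t}$ (Assumption~\ref{ass:star_convexity}), together with interpolation (Assumption~\ref{ass:overparam}), yields
\[
\langle g, x^t - x^\star\rangle \ge f_{\xi^t}(x^t) - f_{\xi^t}(x^\star) = \Delta_\xi^t .
\]
Assumption~\ref{ass:smoothness2_weak} gives $\|g\| \le \mathcal{K}_\xi(x^t)\, r^t$. Combining these,
\[
(r^{t+1})^2 \le (r^t)^2 - \frac{2\eta^t \Delta_\xi^t}{\mathcal{K}_\xi(x^t)\, r^t} + (\eta^t)^2 .
\]

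\textbf{Step 2: monotonicity of the distance.} This is the delicate point, because the bound involves $r^t$ while the step uses $D = r^0$. I would argue by induction that $r^t \le D$. Suppose $r^t \le D$. Substituting $\eta^t = \Delta_\xi^t/(D\,\mathcal{K}_\xi(x^t))$ and using $1/r^t \ge 1/D$, we get
\[
(r^{t+1})^2 \le (r^t)^2 - \frac{2(\Delta_\xi^t)^2}{D^2 \mathcal{K}_\xi^2} + \frac{(\Delta_\xi^t)^2}{D^2\mathcal{K}_\xi^2} = (r^t)^2 - \frac{(\Delta_\xi^t)^2}{D^2 \mathcal{K}_\xi(x^t)^2}.
\]
So $r^{t+1} \le r^t \le D$, which closes the induction almost surely. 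This pathwise argument avoids needing any conditional-expectation structure.

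\textbf{Step 3: telescoping.} Summing the decrease inequality over $t$ gives
\[
\sum_{t<T} \frac{(\Delta_\xi^t)^2}{\mathcal{K}_\xi(x^t)^2} \le D^4 .
\]
By Cauchy–Schwarz,
\[
\sum_t \Delta_\xi^t = \sum_t \frac{\Delta_\xi^t}{\mathcal{K}_\xi}\,\mathcal{K}_\xi \le D^2 \sqrt{\textstyle\sum_t \mathcal{K}_\xi(x^t)^2}.
\]
Take expectations and apply Jensen ($\mathbb{E}\sqrt{Z} \le \sqrt{\mathbb{E} Z}$), then divide by $T$. This gives the first bound.

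\textbf{Bounded case.} If $\Delta_\xi^t \le M$, then $\mathcal{K}_\xi(x^t) \le \bar{\mathcal{K}}$ by monotonicity of $\mathcal{K}$ in $\Delta$. Hence $\sum_t \mathcal{K}_\xi(x^t)^2 \le T\bar{\mathcal{K}}^2$, and the first bound becomes $D^2\bar{\mathcal{K}}/\sqrt{T}$.

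\textbf{Expected obstacle.} The main care needed is in Step 1. The smoothness-at-$x^\star$ bound requires $\nabla f_\xi(x^\star) = 0$, which is implicit since $x^\star$ minimizes each $f_\xi$. The case $g = 0$ also needs handling: then $\Delta_\xi^t = 0$, and I take $\eta^t = 0$ so the step is trivial. Finally, the induction in Step 2 must be stated pathwise, since $\eta^t$ depends on $\xi^t$.
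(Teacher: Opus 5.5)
Your proof is correct and follows the paper's argument step for step. It uses the same distance expansion, the star-convexity/interpolation lower bound on the cross term, the weak-smoothness bound $\|\nabla f_{\xi^t}(x^t)\|\le\mathcal{K}_\xi(x^t)\|x^t-x^\star\|$, the almost-sure induction $\|x^t-x^\star\|\le D$, and telescoping to $\sum_t(\Delta_\xi^t)^2/\mathcal{K}_\xi(x^t)^2\le D^4$. The only variation is the last step: you apply Cauchy--Schwarz pathwise and then Jensen, while the paper takes expectations first and uses $\mathbb{E}[X^2/Y]\ge(\mathbb{E}X)^2/\mathbb{E}Y$ followed by Cauchy--Schwarz over $t$. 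Both routes give the same bound; yours also yields an almost-sure version and does not need the finite-expectation hypothesis the paper's inequality invokes.
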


\section{Practical Scheduler with Adaptive Warmup}
\label{sec:our_sceduler}
In this section, we demonstrate how to apply the optimal learning rate scheduler in practice. Fixing $\rho = 2$ and removing constant scaling factors for clarity, the scheduler form Theorems \ref{thm:1}, \ref{thm:2} and \ref{thm:3} takes the form
\begin{equation}
    \label{eq:practical_eta}
    \eta(\Delta) = \frac{\Delta}{K_0 + K_1 \Delta + K_2 \Delta^2}.
\end{equation}
Since the coefficients $K_0,K_1,K_2$ are unknown and cannot be measured during training, we determine them through three (by the number of the independent parameters) practical constraints. 

\paragraph{(1) Peak learning rate.}
The function $\eta(\Delta)$ has a unique local maximum at some $\Delta' \in [0; \Delta^0 = f(x^0) - f^\star]$, and we enforce
\begin{equation}
    \label{eq:constr_1}
    \eta(\Delta') = \mathrm{lr},
\end{equation}
where $\mathrm{lr}$ denotes the peak learning rate. This quantity is a standard user-facing parameter in all PyTorch learning rate schedulers~\citep{paszke2019pytorch}.

\paragraph{(2) Warm-up floor.}
We match the initial step size by requiring
\begin{equation}
    \label{eq:constr_2}
    \eta(\Delta^0) = \frac{\mathrm{lr}}{\mathrm{div}},
\end{equation}
where $\mathrm{div} \geq 1$ is a user-specified divisor. Typical implementations use $\mathrm{div} = 100$, again mirroring standard PyTorch practice~\citep{paszke2019pytorch}.

We solve constraints \eqref{eq:constr_1} and \eqref{eq:constr_2} analytically to express $K_0, K_1$ and $K_2$ in terms of $\Delta'$ (see Appendix~\ref{app:K_012} for closed-form expressions). Now we need to define $\Delta'$.

\paragraph{(3) Target-shape matching.}
The third condition determines $\Delta'$ and aligns the profile of \eqref{eq:practical_eta} with a classical warmup+decay schedule. We first introduce the target learning rate $\eta_{\mathrm{trgt}}(\Delta)$ that we aim to approximate:
\begin{equation*}\label{eq:target_schedule}
\eta_{\mathrm{trgt}}(\Delta) =
\begin{cases}
\displaystyle
\frac{\mathrm{lr}}{\mathrm{div}} 
+
\frac{(\mathrm{lr} - \frac{\mathrm{lr}}{\mathrm{div}})(\Delta^0 - \Delta)}{\Delta^0 - \Delta'},
\!\!\!\!&
\Delta \in [\Delta',\Delta^0],\\[10pt]
\displaystyle
\frac{1}{2} \mathrm{lr} \Bigl(1 - \cos \bigl(\pi \Delta / \Delta'\bigr)\Bigr),
&
\Delta \in [0,\Delta'].
\end{cases}
\end{equation*}
The definition of $\eta_{\mathrm{trgt}}$ is provided as the most commonly used choice in practice: linear warm-up and cosine decay \cite{paszke2019pytorch,loshchilov2017sgdr}. The functions on the intervals  $[0,\Delta']$ and $[\Delta',\Delta^0]$ may be replaced with any schedulers without affecting the construction.

Rather than matching $\eta(\Delta)$ to $\eta_{\mathrm{trgt}}(\Delta)$ over the entire interval $[0, \Delta^0]$, we focus on a neighborhood around $\Delta'$, since, this is the switching point between warmup and decay. For large deviations away from $\Delta'$, the theoretical schedule \eqref{eq:practical_eta} significantly differs from $\eta_{\mathrm{trgt}}(\Delta)$, whereas matching in a neighborhood of $\Delta'$ controls the behavior of the scheduler exactly where the transition happens. Moreover, by constraint \eqref{eq:constr_1} we have $\eta(\Delta') = \eta_{\mathrm{trgt}}(\Delta')$, therefore the approximation error vanishes at the center. The neighborhood around $\Delta'$ also corresponds to the maximal learning rate, and therefore to the strongest influence on training.

To formalize this localized matching, we introduce an exponential weight $\exp\bigl[-(\Delta - \Delta')^2 / \sigma^2 \bigr]$ that is maximized at $\Delta'$ and decays as we move away from it. The parameter $\sigma$ controls the width of the neighborhood and must be chosen appropriately. 
We determine $\sigma$ by considering the effective step size measured in Frobenius norm. Different LMO geometries produce updates of different magnitudes: for signSGD ($\ell_1$ norm), the output vector of dimensionality $d$ consists of $\pm 1$ entries and thus has squared Frobenius norm $d$, requiring a smaller learning rate than normSGD (Euclidean norm), whose output has unit Frobenius norm. We formalize this idea as follows: for a given $\Delta$ and gradient $g^t$, the squared step size $x^{t+1} - x^t$ satisfies
\begin{equation*}\label{eq:step_bound}
\bigl\|\eta(\Delta) \mathrm{LMO}(g^t)\bigr\|_F^2
=
\eta(\Delta)^2 \bigl\|\mathrm{LMO}(g^t)\bigr\|_F^2
\;\leq\;
\kappa \eta(\Delta)^2,
\end{equation*}
where
$
\kappa := \sup_{\|u\|=1} \|u\|_F^2.
$
captures the dependence on the norm used inside the LMO step \eqref{eq:update_rule_1}. Intuitively, $\kappa$ is the worst-case factor between the unit ball of the optimization norm $\|\cdot\|$ and the Frobenius norm ($\kappa = d$ for signSGD). It depends on the structure of the model: number of layers and the shapes of their parameter tensors and can be computed analytically for each geometry. In Appendix \ref{app:kappas} we provide closed-form expressions together with the corresponding derivations for $\kappa$ for all optimizers considered in our experiments.

Thus, if we think of the effective step $\sqrt{\kappa} \eta(\Delta)$ as having a fixed Frobenius-scale variance $\sigma_F^2$, then the corresponding variance in $\eta(\Delta)$ itself must be $\sigma_F^2/\kappa$. This leads to setting $\sigma^2 = \sigma_F^2/\kappa$ in the exponential weight, which normalizes the neighborhood around $\Delta'$ in a way that is consistent across different LMO geometries.

Combining these considerations, we formulate the objective function that we wish to minimize with respect to $\Delta'$:
\begin{equation}\label{eq:constr_3}
\int_{0}^{\Delta^0} \exp\left[-\frac{(\Delta - \Delta')^2 \kappa}{\sigma_F^2} \right] \bigl(\eta(\Delta) - \eta_{\mathrm{trgt}}(\Delta)\bigr)^2 d \Delta,
\end{equation}
where $\sigma_F$ is a fixed constant.
This condition is then enforced numerically: we sample $1000$ candidate values of $\Delta'$, evaluate the objective \eqref{eq:constr_3}
for each candidate, and select the minimizer. Since this procedure is executed only once at the beginning of training, it contributes no observable overhead to the overall runtime. Once $K_0$, $K_1$, $K_2$ and $\Delta'$ are determined from constraints \eqref{eq:constr_1}--\eqref{eq:constr_3}, the scheduler is fully specified. Algorithm~\ref{alg:lr_scheduler} summarizes the complete scheduler in a practical, ready-to-implement form.

\begin{algorithm}{Adaptive warmup scheduler}
\label{alg:lr_scheduler}
\begin{algorithmic}[1]
\State \textbf{Input:} total steps $T$,
target loss $f^\star$, peak $\mathrm{lr}$, divisor $\mathrm{div}$, warmup and decay schedulers = $(\mathrm{linear}, \mathrm{cos})$, optimizer $\in \{\text{Muon}, \text{Lion}, \text{normSGD}\}$.
\State \textbf{Initialize:} $\texttt{is\_init} \leftarrow \texttt{False}$, $\texttt{is\_decay} \leftarrow \texttt{False}$, $\texttt{warmup\_steps} \leftarrow 0$.
\Function{get\_lr}{$\texttt{loss}$}
    \State $\Delta \leftarrow \texttt{loss} - f^\star$
    \If{not \texttt{is\_init}} 
        \State $\texttt{is\_init} \leftarrow \texttt{True}$ 
        \State determine $K_0, K_1, K_2, \Delta'$ via constraints \eqref{eq:constr_1}–\eqref{eq:constr_3} \label{line:1}
    \EndIf 
    \If{$\Delta \geq \Delta'$ \textbf{and} not \texttt{is\_decay}} \label{line:3}
        \State $\texttt{warmup\_steps} \leftarrow \texttt{warmup\_steps} + 1$
        \State \textbf{return} $\displaystyle \frac{\Delta}{K_0 + K_1 \Delta + K_2 \Delta^2}$ \label{line:4}
    \Else \label{line:5}
        \If{not \texttt{is\_decay}}
            \State $\texttt{is\_decay} \leftarrow \texttt{True}$
            \State initialize decay scheduler with
            $T_{\text{decay}} \leftarrow T - \texttt{warmup\_steps}$ and starting lr $=\mathrm{lr}$
        \EndIf
        \State \textbf{return} $\texttt{decay\_scheduler.get\_lr}()$ \label{line:6}
    \EndIf
\EndFunction
\end{algorithmic}
\end{algorithm}

Line 7 of Algorithm \ref{alg:lr_scheduler} uses the initialization procedure discussed in this section. Lines 9-11 apply \eqref{eq:practical_eta} to compute the adaptive learning rate. When $\Delta < \Delta'$, we switch from the adaptive warmup \eqref{eq:practical_eta} to a classical decay scheduler (lines 12-17). This choice is motivated by two factors: (i) the theoretical form in~\eqref{eq:practical_eta} produces tiny learning rates at small $\Delta$, and (ii) the parabolic approximation of the empirical smoothness ratio disappears in that regime (see Figure \ref{fig:smoothness-lion}).

\begin{figure*}[ht]
    \centering
    \begin{subfigure}{0.32\textwidth}
        \includegraphics[width=\linewidth]{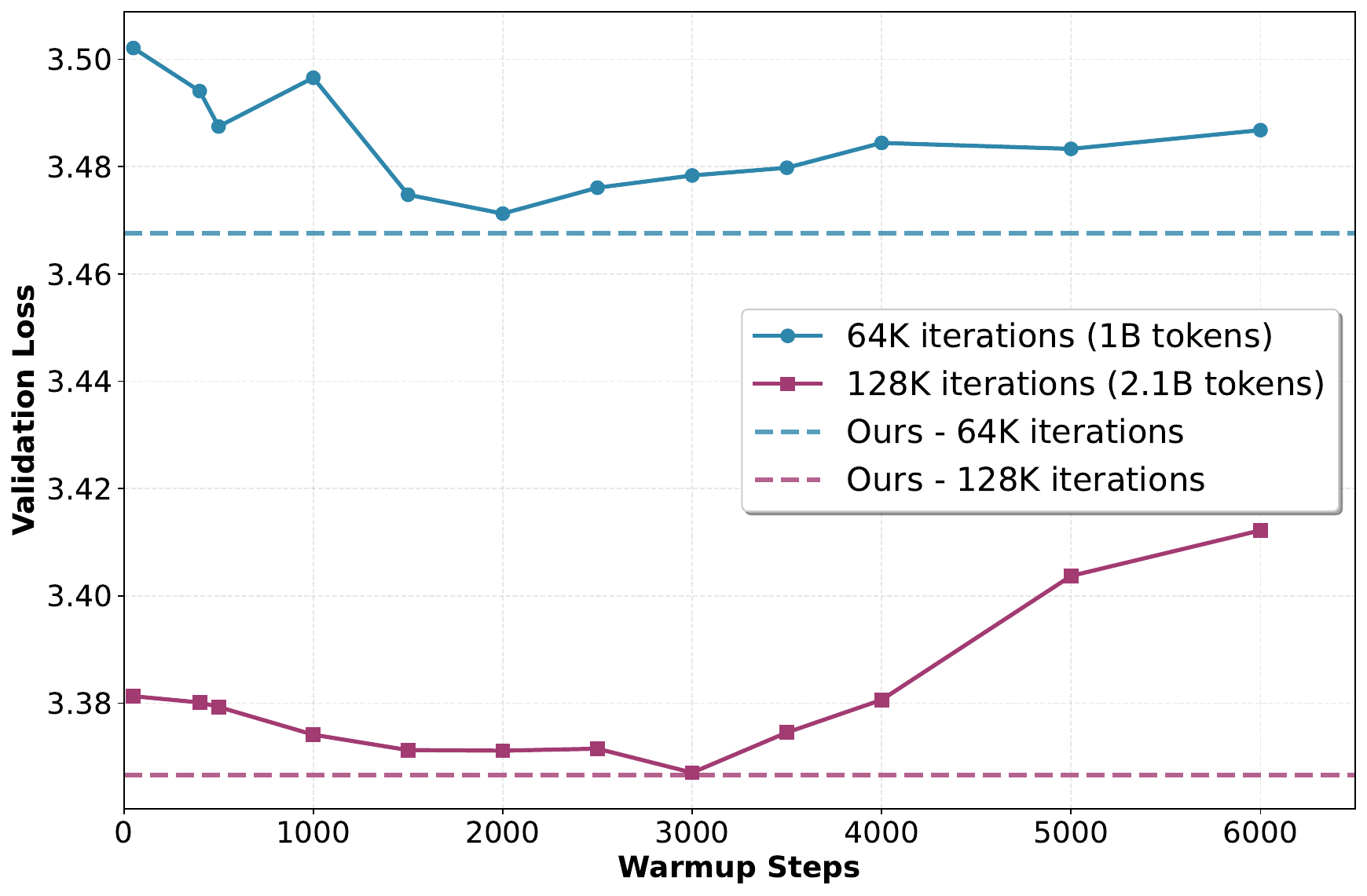}
    \end{subfigure}
    \hfill
    \begin{subfigure}{0.32\textwidth}
        \includegraphics[width=\linewidth]{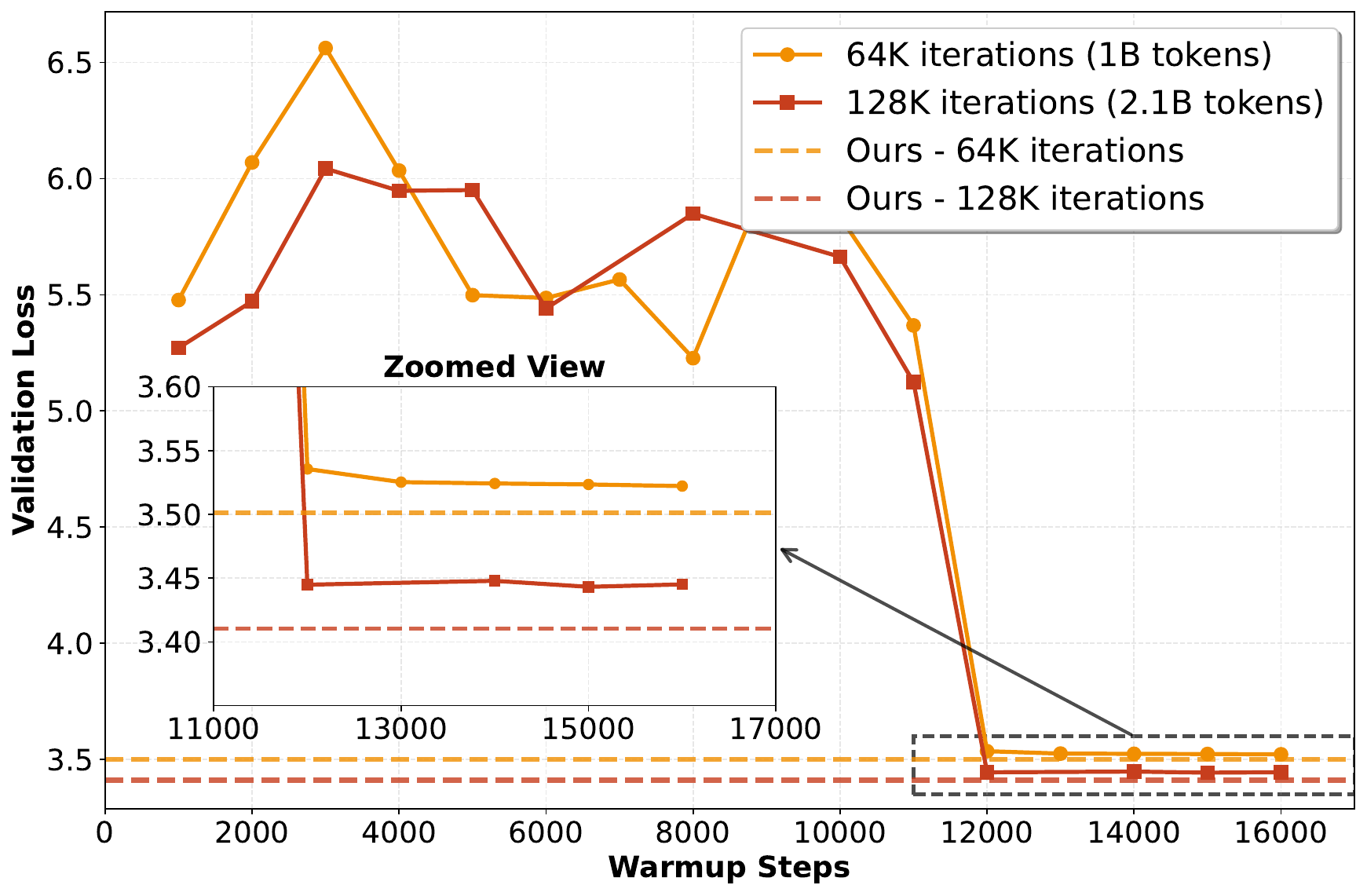}
    \end{subfigure}
    \hfill
    \begin{subfigure}{0.32\textwidth}
        \includegraphics[width=\linewidth]{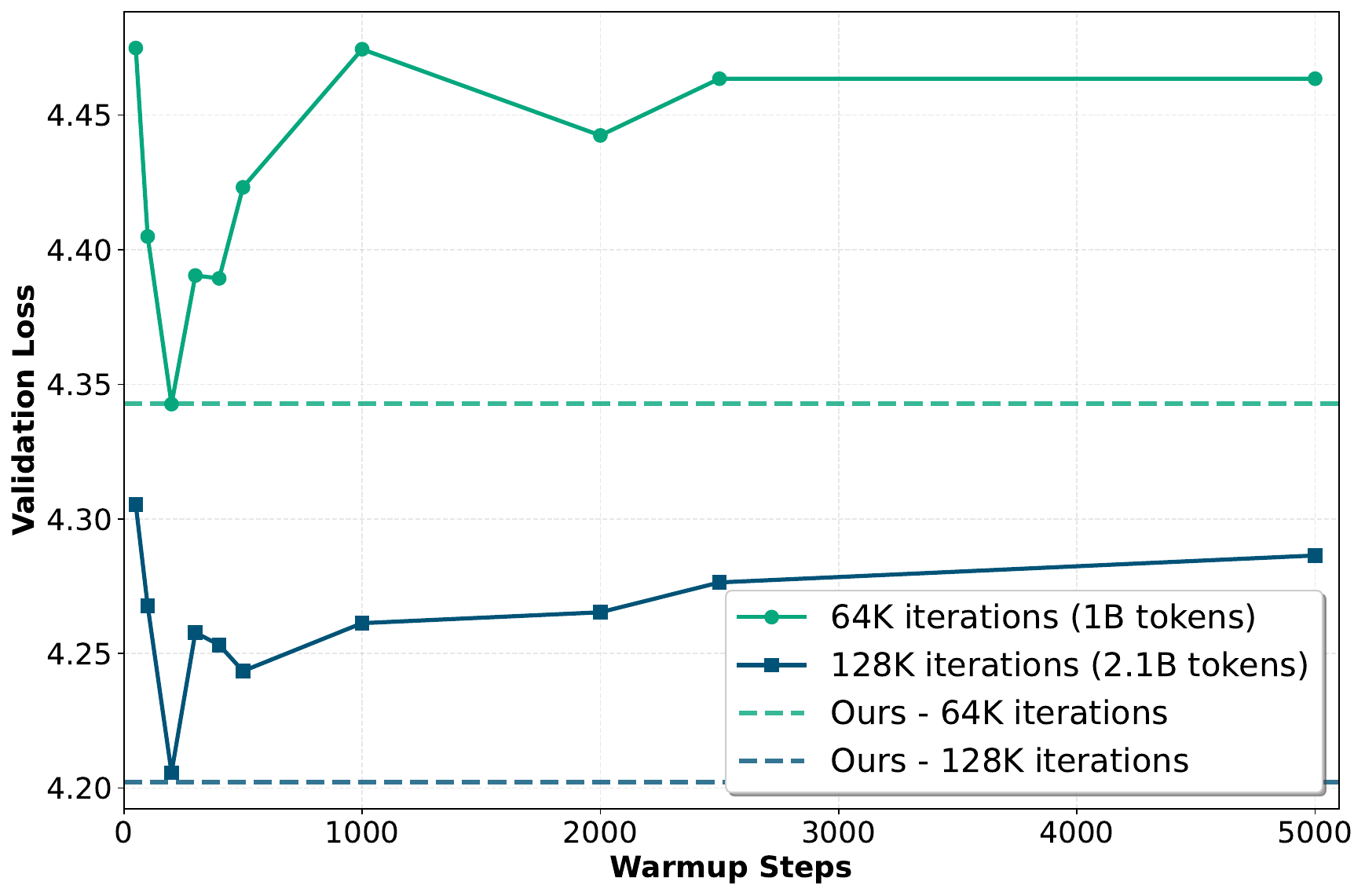}
    \end{subfigure}

    \begin{subfigure}{0.32\textwidth}
        \includegraphics[width=\linewidth]{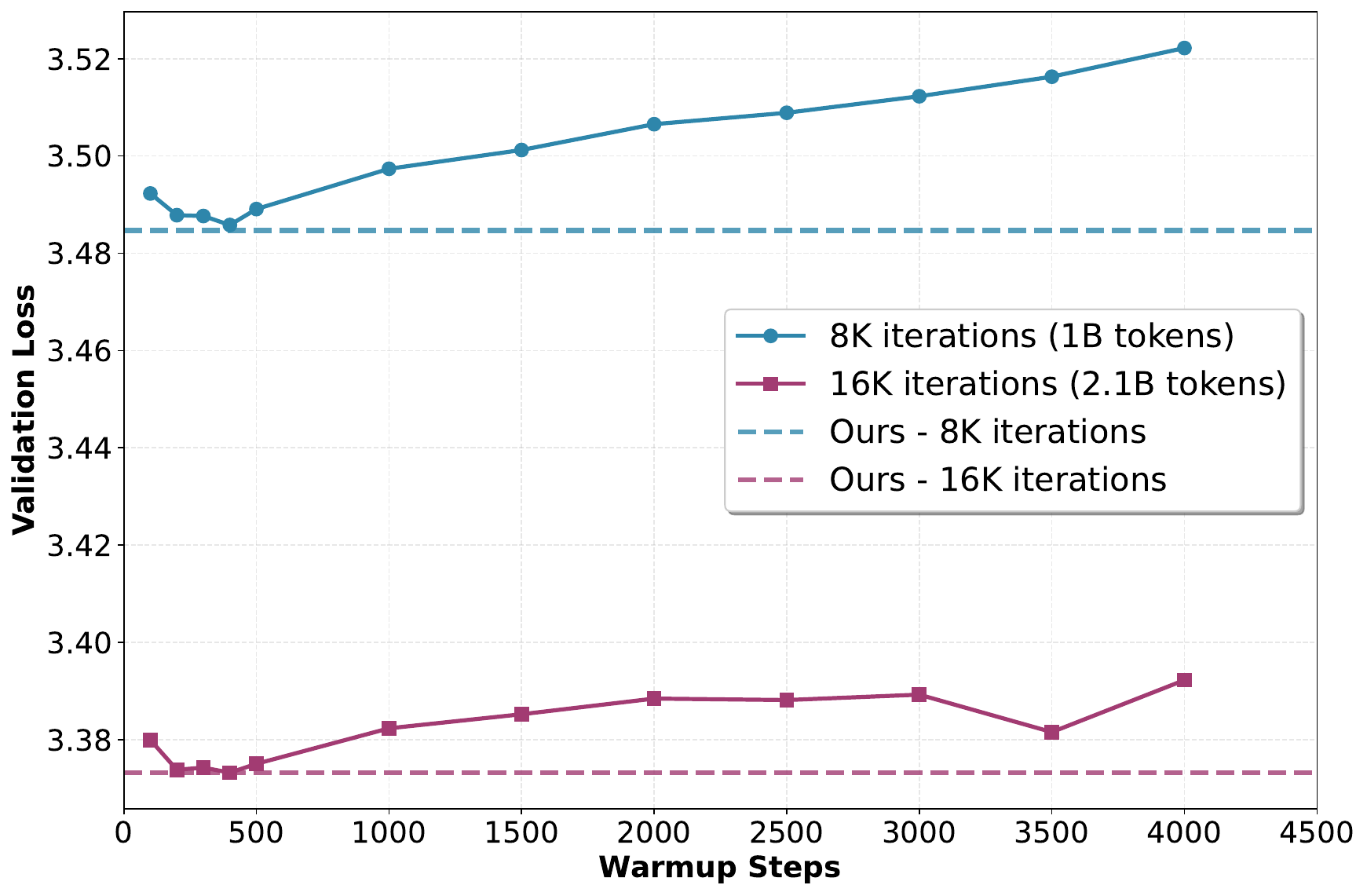}
        \caption{Muon}
    \end{subfigure}
    \hfill
    \begin{subfigure}{0.32\textwidth}
        \includegraphics[width=\linewidth]{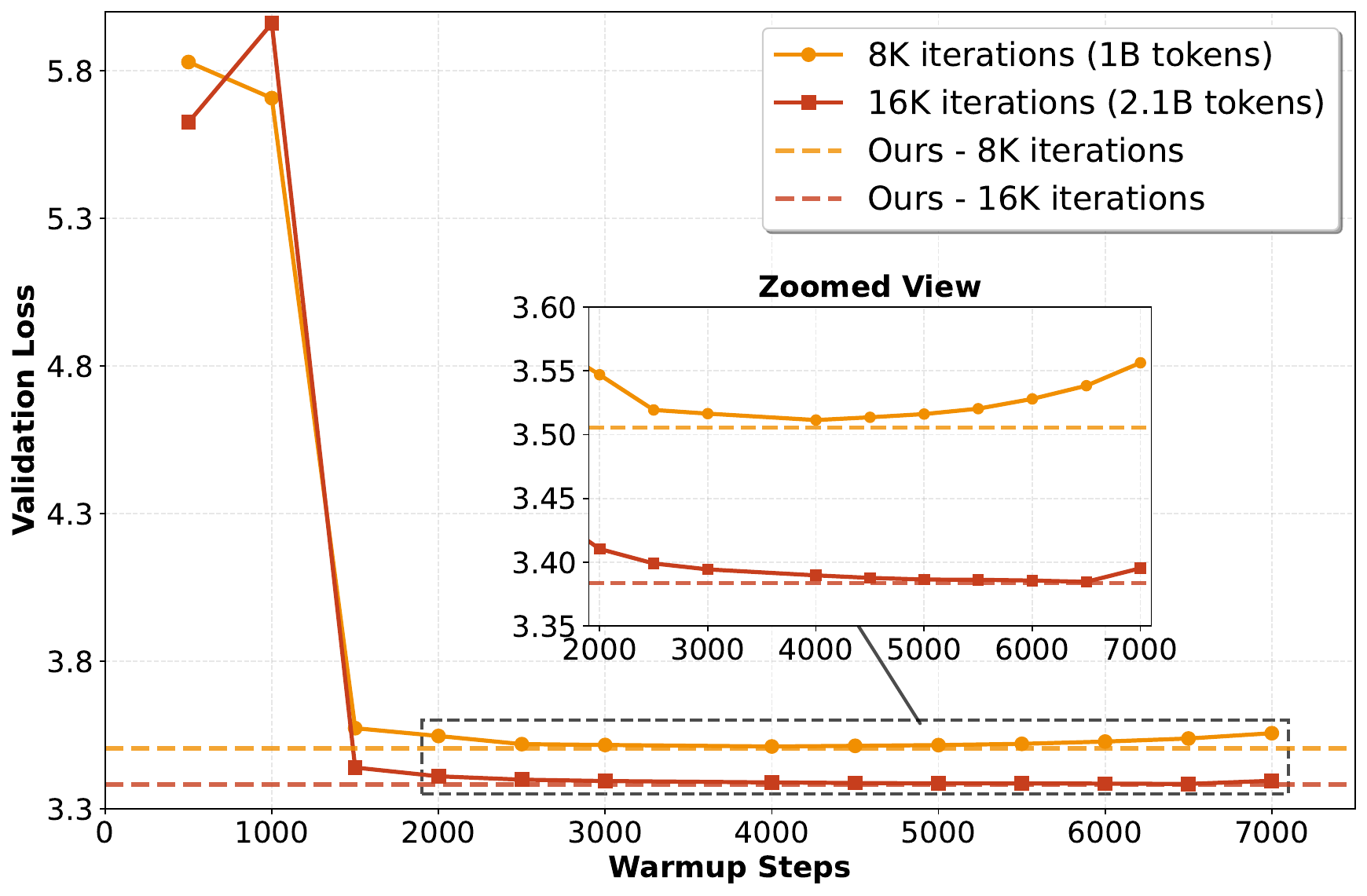}
        \caption{Lion}
    \end{subfigure}
    \hfill
    \begin{subfigure}{0.32\textwidth}
        \includegraphics[width=\linewidth]{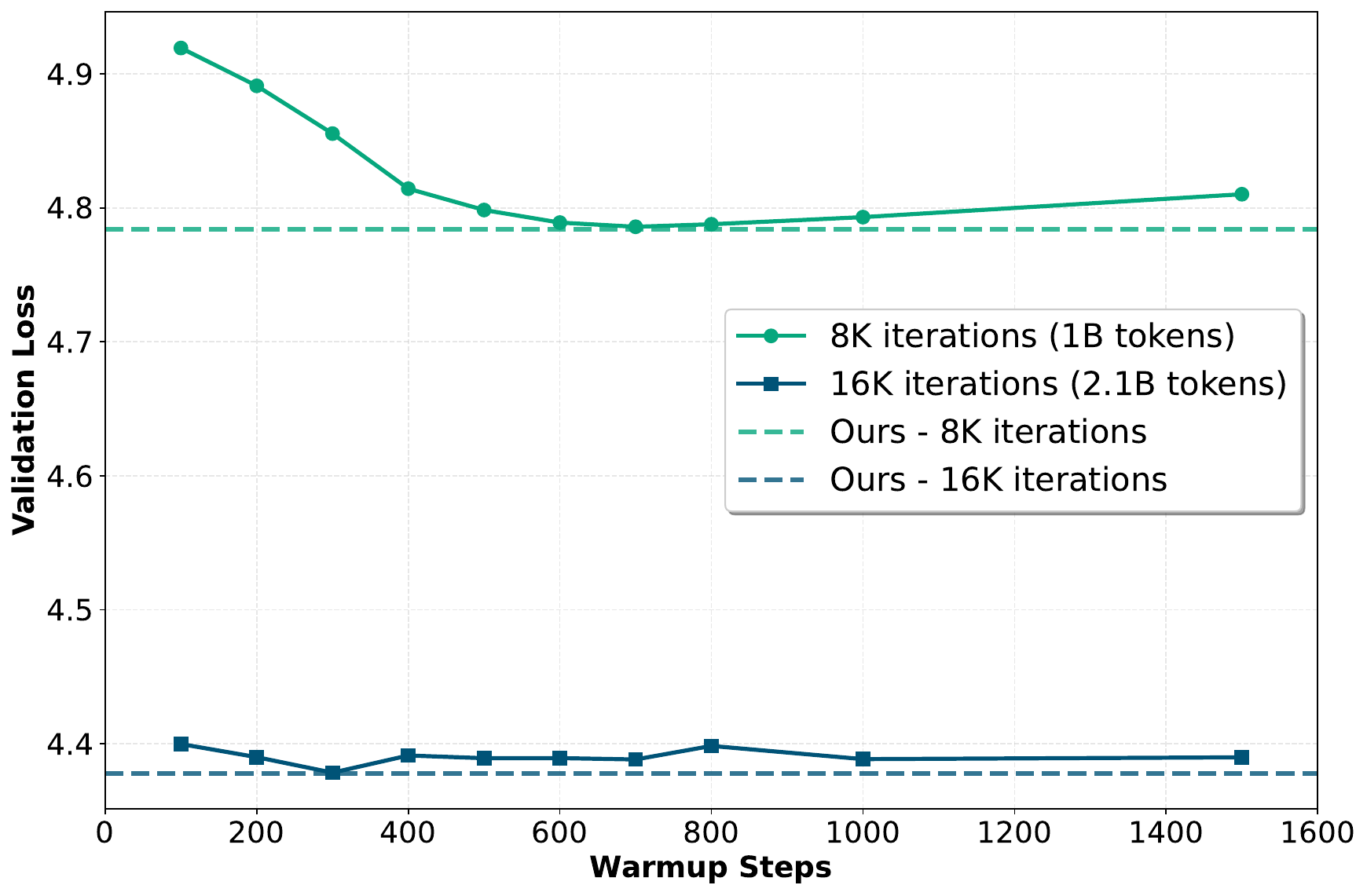}
        \caption{normSGD}
    \end{subfigure}
    \caption{Validation loss on LLaMA 124M model with $\mathrm{bs} = 32$ (top) and $256$ (bottom) as a function of manually selected warmup length (solid) vs.\ adaptive warmup (dashed). Across optimizers, the adaptive method outperforms or at least matches the best manually tuned value without any grid search. }
    \label{fig:warmup-sweep-124M}
\end{figure*}

\section{Experiments}
\label{sec:exp}

We evaluate the proposed scheduler (Algorithm \ref{alg:lr_scheduler}) in a medium- and large-scale LLM pretraining. Our setup follows the experimental protocol of~\cite{semenov2025benchmarking}, where several optimizers were benchmarked on training LLaMA-style transformer models on the FineWeb dataset \cite{penedo2024fineweb}.
For direct comparability, we adopt the same model architecture family: decoder-only transformer networks with rotary position embeddings and the standard LLaMA hyperparameter configuration \cite{fedus2022switch}.
We reuse the optimizer configurations from~\cite{semenov2025benchmarking} wherever available. Specifically, for Muon and Lion we copy all hyperparameters (learning rate, weight decay, etc.) from the reference setup. We additionally evaluate normSGD, that was not included in~\cite{semenov2025benchmarking}, therefore for this optimizer we tune  $\mathrm{lr}$ and $\mathrm{div}$ independently. Across all model sizes, batches, and optimizers, we use fixed Frobenius variance $\sigma_F^2=10^3$ without any additional tuning. Full tuning ranges and configuration tables are provided in Appendix~\ref{appendix:hyper}.
We consider models with 124M and 256M parameters, and evaluate $32$ and $256$ batch-size regimes. For each optimizer, we sweep over a range of manually specified warmup lengths and compare the resulting final validation loss against the value obtained from Algorithm~\ref{alg:lr_scheduler}. Figures~\ref{fig:warmup-sweep-124M} and \ref{fig:warmup-sweep-210M} summarize the trends for Muon, Lion and normSGD for 124M and 256M model sizes respectfully.

To further demonstrate that the proposed scheduler generalizes beyond language modeling, we additionally evaluate it on a computer vision pretraining task using a Swin Transformer Base (Swin-B)~\citep{liu2021swin} trained from scratch on ImageNet-1K~\citep{5206848} with the Muon optimizer; see Section~\ref{sec:llm_results} and Appendix~\ref{appendix:hyper_swin} for full hyperparameters.

\begin{figure*}[t]
    \centering
    \begin{subfigure}{0.32\textwidth}
        \includegraphics[width=\linewidth]{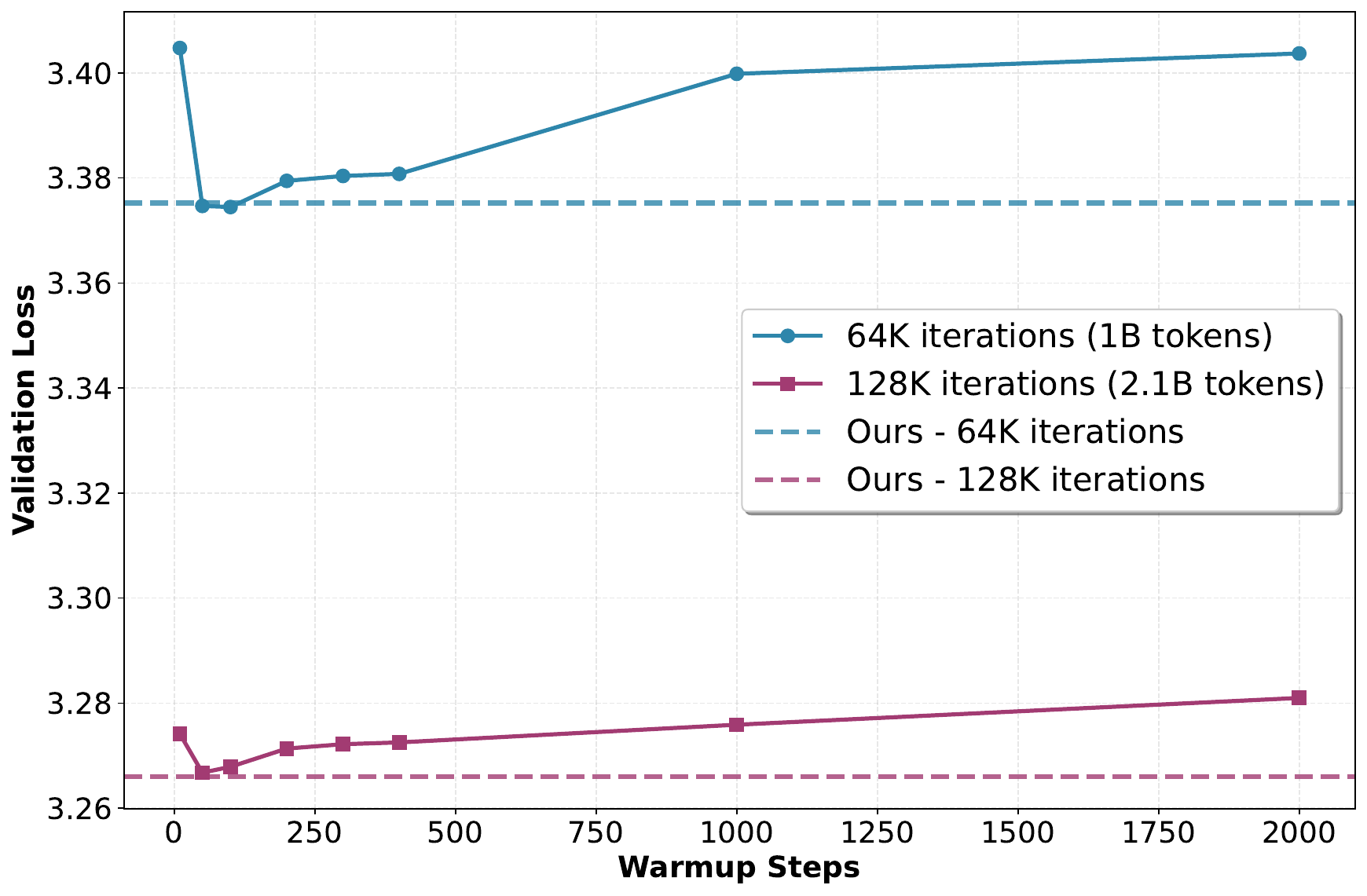}
    \end{subfigure}
    \hfill
    \begin{subfigure}{0.32\textwidth}
        \includegraphics[width=\linewidth]{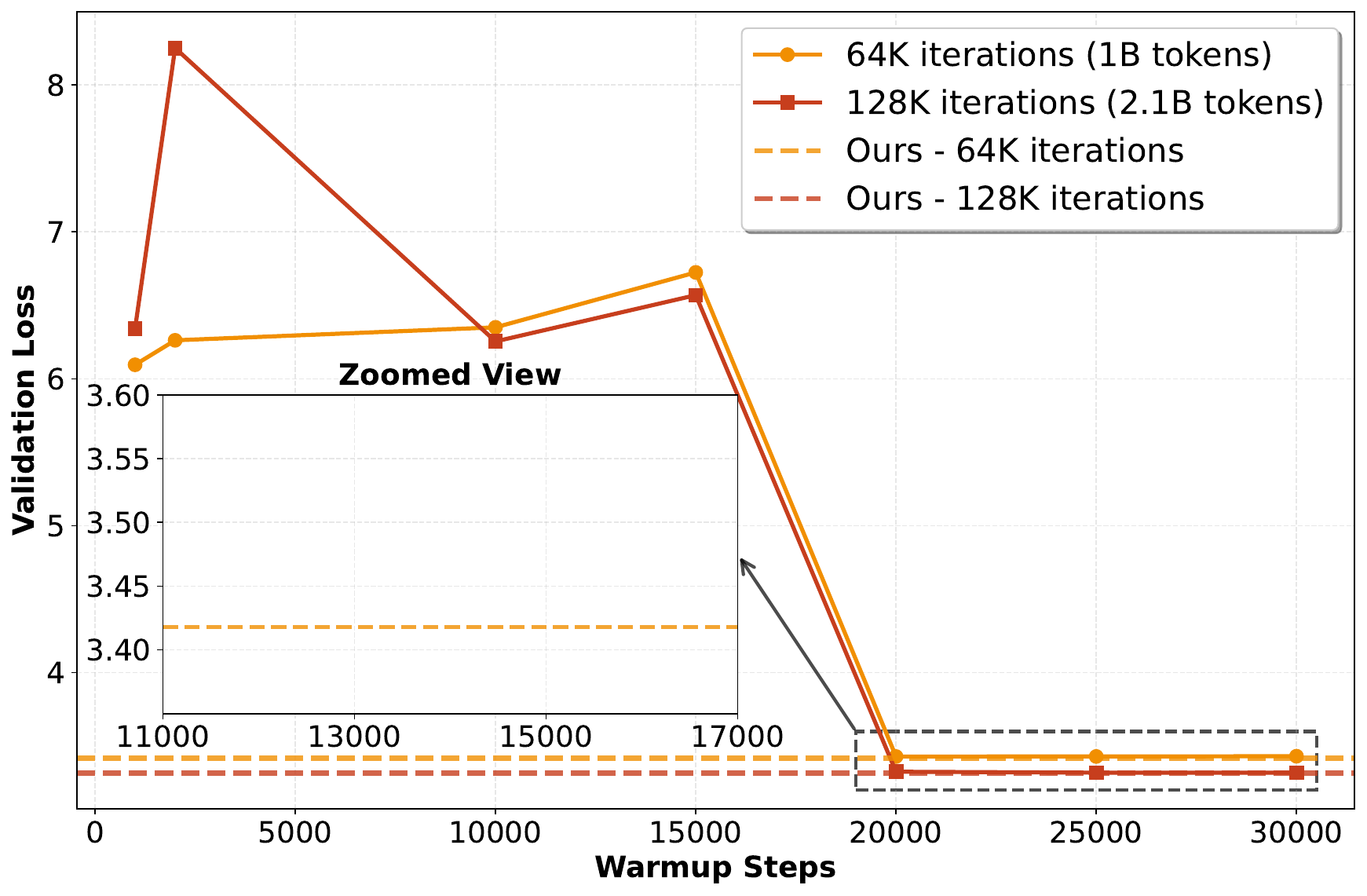}
    \end{subfigure}
    \hfill
    \begin{subfigure}{0.32\textwidth}
        \includegraphics[width=\linewidth]{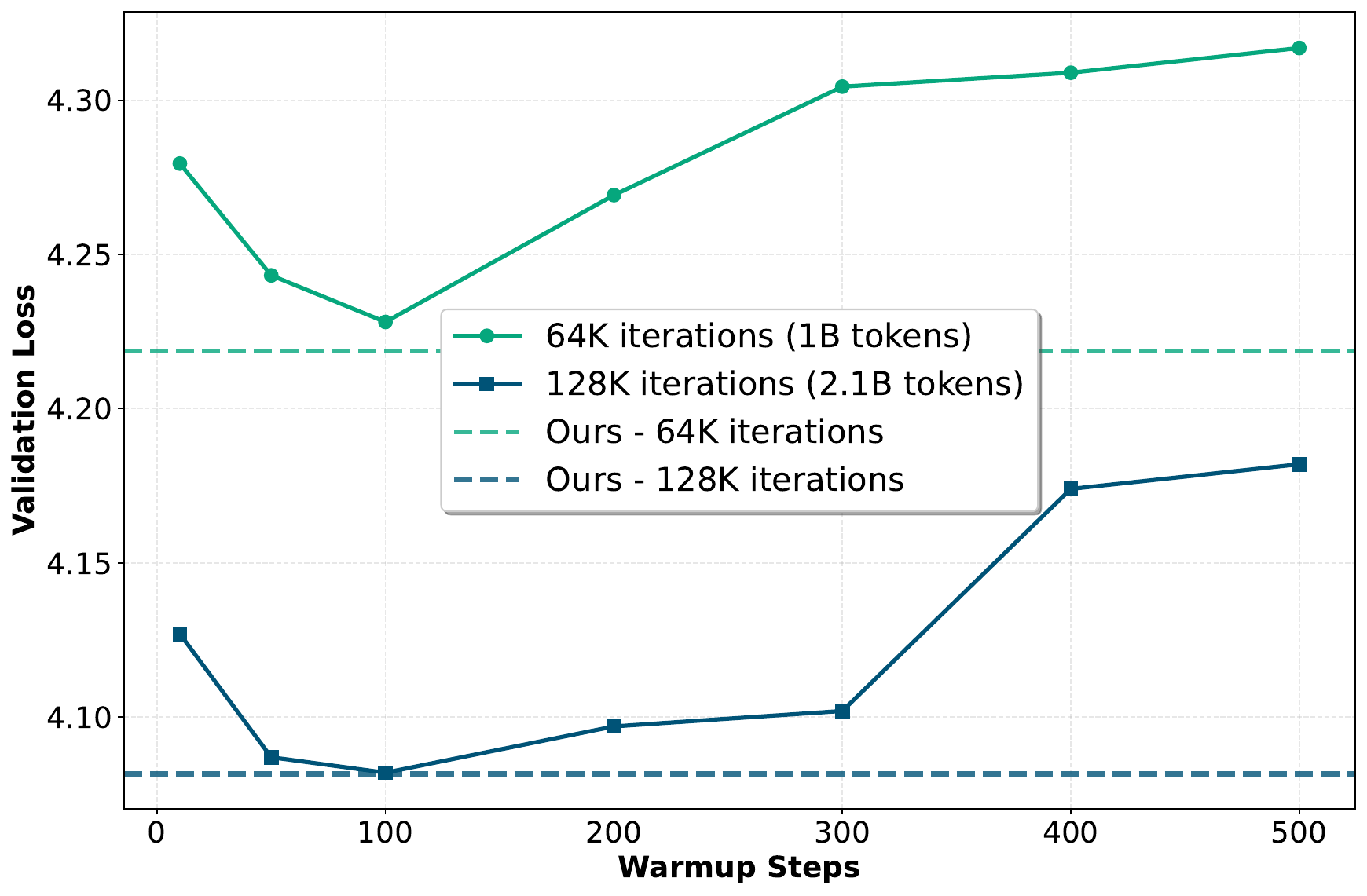}
    \end{subfigure}

    \begin{subfigure}{0.32\textwidth}
        \includegraphics[width=\linewidth]{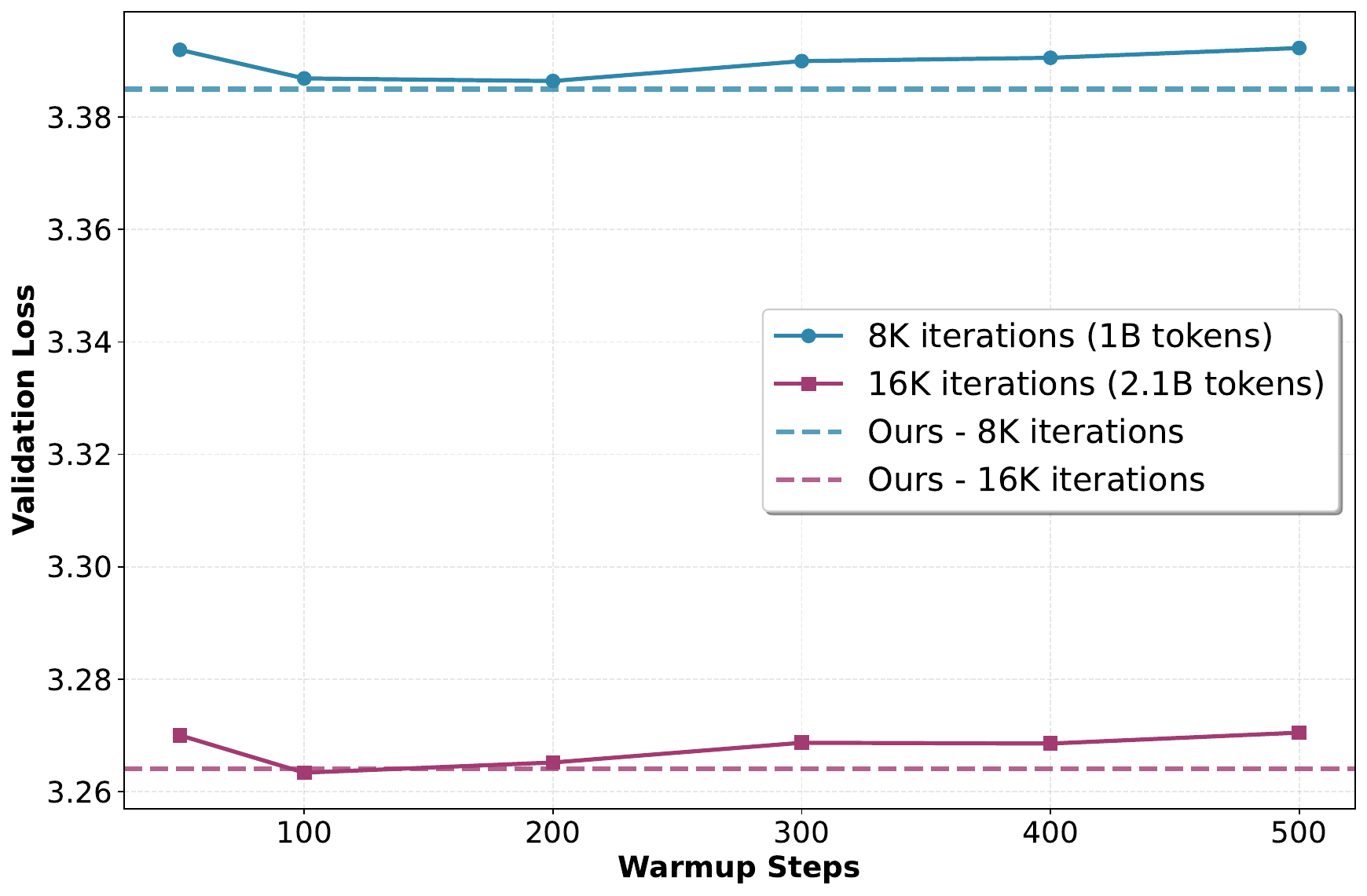}
        \caption{Muon}
    \end{subfigure}
    \hfill
    \begin{subfigure}{0.32\textwidth}
        \includegraphics[width=\linewidth]{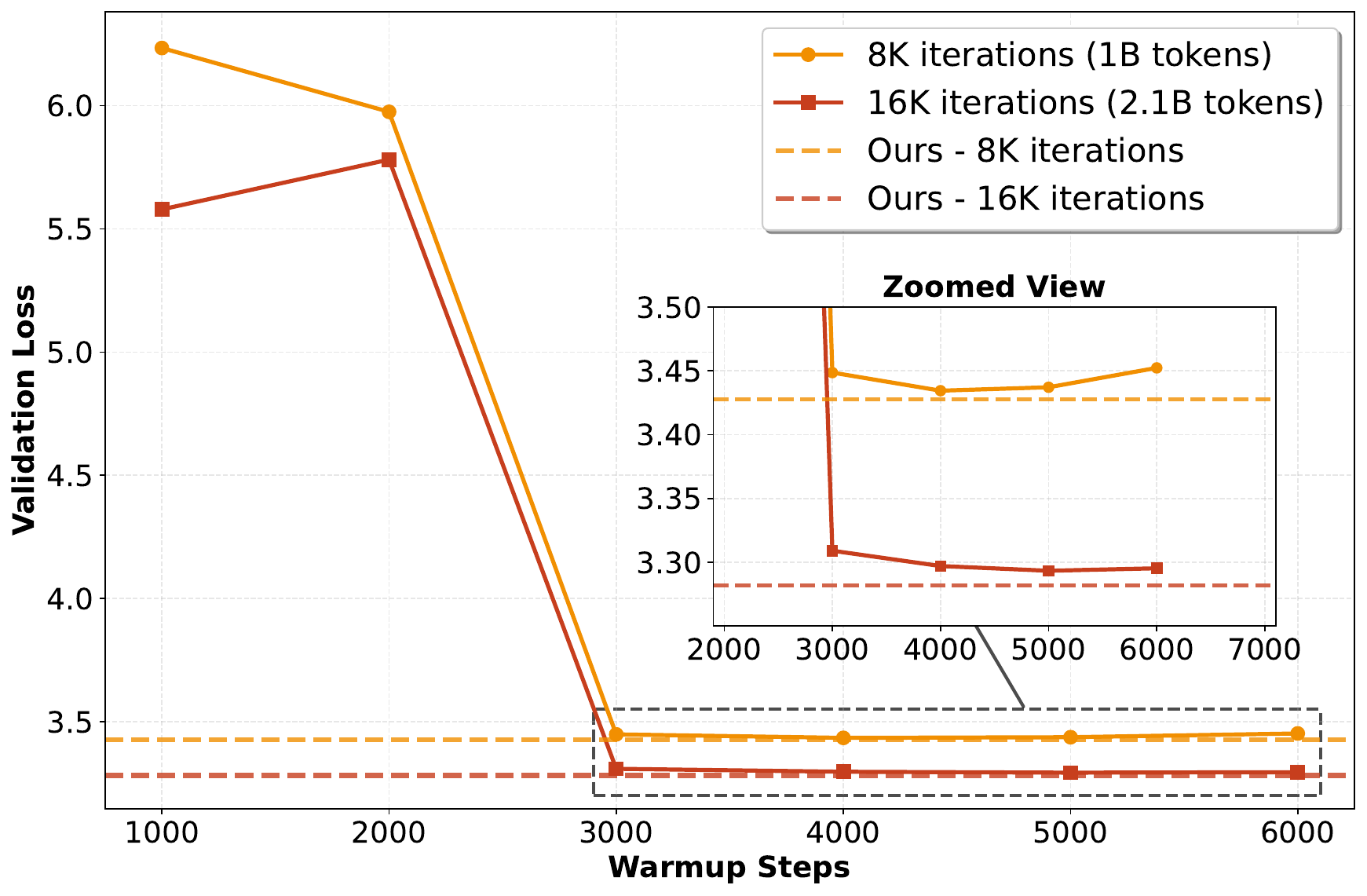}
        \caption{Lion}
    \end{subfigure}
    \hfill
    \begin{subfigure}{0.32\textwidth}
        \includegraphics[width=\linewidth]{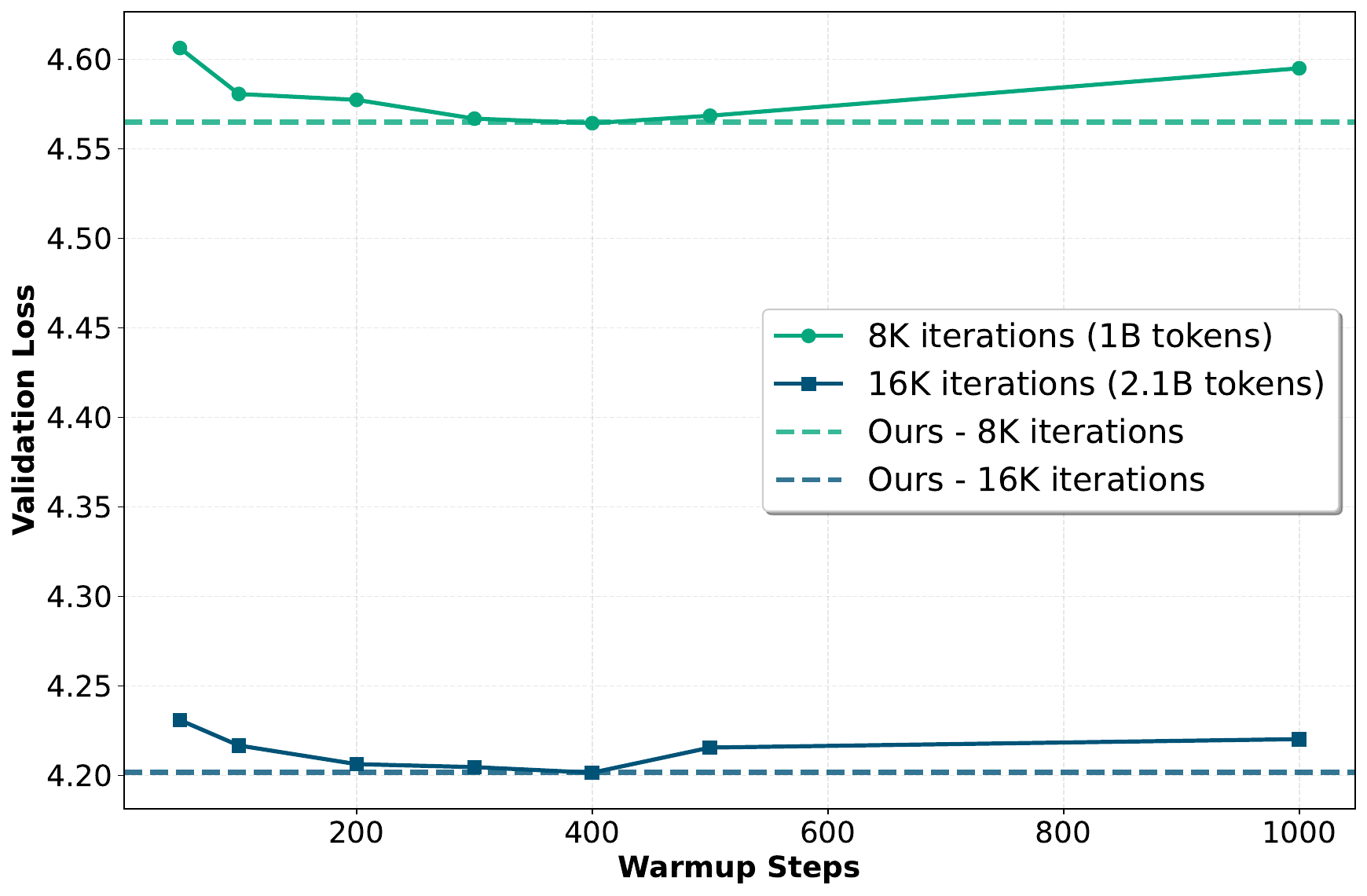}
        \caption{normSGD}
    \end{subfigure}
    \caption{Validation loss on LLaMA 210M model with $\mathrm{bs} = 32$ (top) and $256$ (bottom) as a function of manually selected warmup length (solid) vs.\ adaptive warmup (dashed). Across optimizers, the adaptive method outperforms or at least matches the best manually tuned value without search.}
    \label{fig:warmup-sweep-210M}
\end{figure*}

\subsection{Experiment Results}
\label{sec:llm_results}

\textbf{Language model pretraining.}
Across all optimizers, model sizes, and batch-sizes, the adaptive schedule improves or at least matches the best hand-tuned warm-up value, without requiring any grid search.

This effect is visible for Muon and normalized SGD, where the dashed line corresponding to our method lies at (or below) the minimum of the manual sweep. This suggests that rather than committing to a fixed linear ramp, allowing the learning rate to follow the rational shape $\eta(\Delta)$ yields a more stable transition into the high-curvature regime.

The Lion results highlight the sensitivity of the warm-up phase: choosing a warm-up length that is too short causes optimization to diverge.
Our adaptive approach navigates this regime robustly, automatically identifying an appropriate transition point $\Delta'$ and avoiding catastrophic failures.

The benefit of adaptive warm-up is most pronounced in the smaller batch-size regime ($\mathrm{bs}=32$), where gradient noise is higher and the optimization dynamics are more sensitive to aggressive early steps.

\begin{wrapfigure}[18]{r}{0.45\textwidth}
    \centering
    \includegraphics[width=0.45\textwidth]{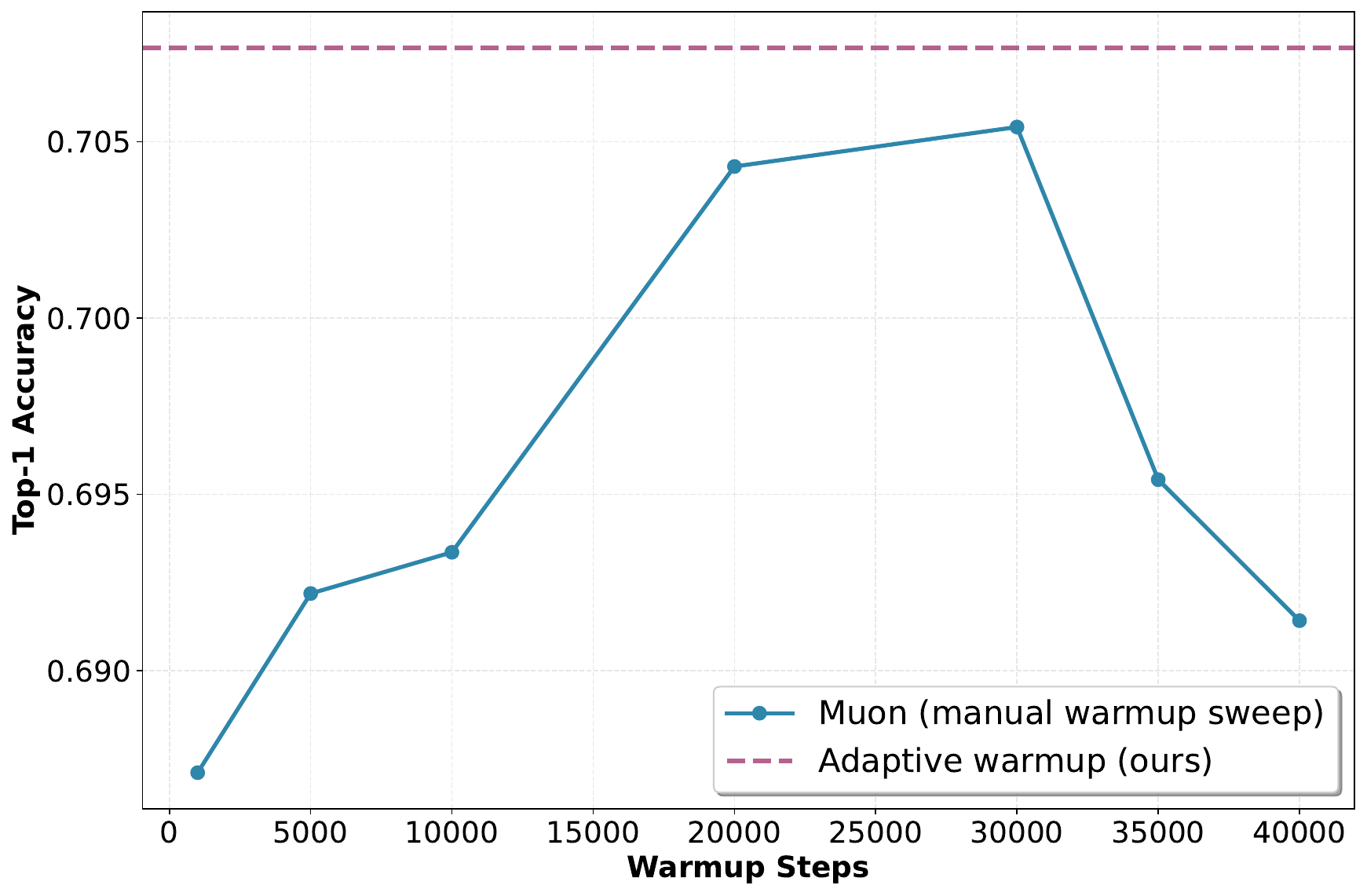}
    \caption{Accuracy on ImageNet-1K for Swin-B trained with Muon vs.\ manually selected warmup (solid). The dashed line shows the adaptive scheduler, which achieves the highest accuracy without any warmup search.}
    \label{fig:swin}
\end{wrapfigure}

In this setting, careful control of the warm-up phase is critical, and the adaptive scheduler provides a clear advantage over manually specified schedules.
For the larger batch-size regime ($\mathrm{bs}=256$), the performance gap between different warm-up choices narrows, consistent with more stable gradients. Nevertheless, the adaptive method remains robust and competitive.
Importantly, the qualitative trends are consistent across model scales: both the 124M and 210M models exhibit the same sensitivity patterns and benefit similarly from adaptive warm-up, indicating that the proposed scheduler generalizes across pretraining scales.

\textbf{Vision transformers.}
We evaluate a Swin Transformer Base (Swin-B)~\citep{liu2021swin} pretrained on ImageNet-1K~\citep{5206848} with Muon for 50K iterations, using $f^\star = 0.4$ as the target loss.
Figure~\ref{fig:swin} shows Top-1 accuracy vs.\ warmup length; the adaptive scheduler surpasses the best manually tuned result, confirming transfer to vision architectures without additional tuning.

\subsection{Ablations}
\label{sec:ablations}
\label{sec:f_star_abl}

\textbf{Target loss.}
The target loss $f^\star$ is the only parameter requiring prior task knowledge.
It enters via the suboptimality gap $\Delta^t = f(x^t) - f^\star$, which shifts the schedule along the $\Delta$ axis.
$f^\star$ does not need to be exact: it normalizes the loss scale, and any value within a reasonable range of the true optimum works well.
We study the sensitivity on LLaMA 124M pretraining with $\mathrm{bs}=32$ using Muon (Figure~\ref{fig:abl-fstar}).
Any $f^\star \in [3.0, 4.0]$ outperforms the best manually tuned warm-up; only extreme values ($f^\star = 0$ or $f^\star \geq 5$) cause noticeable degradation.
The best result is near $f^\star = 3.2$, matching the expected pretraining loss for this architecture on FineWeb.

\textbf{Frobenius-scale variance.}
We vary $\sigma_F^2$ over two orders of magnitude on the same setup (Figure~\ref{fig:abl-sigma}).
Values from $10^{-3}$ to $3 \times 10^{-3}$ all outperform the manually tuned baseline, with $\sigma_F^2 = 3 \times 10^{-3}$ achieving the lowest validation loss.
Very small values ($\sigma_F^2 \leq 10^{-4}$) degrade performance, as they effectively ignore gradient noise and cause the scheduler to underestimate the required warmup.
In all experiments we use $\sigma_F^2 = 10^{-3}$ without per-setup tuning.

\textbf{Schedule type.}
To assess whether the calibration overfits to a specific schedule family, we vary both the warmup and decay functional forms used as $\eta_{\mathrm{trgt}}$ in constraint~\eqref{eq:constr_3}.
We consider all combinations of linear and cosine warmup with linear and cosine decay, as well as WSD~\citep{hu2024minicpm} and WSO~\citep{yano2026wso} schedules (Figure~\ref{fig:abl-sched}).
Across all six schedule types the adaptive method consistently outperforms the manually tuned baseline, confirming that the improvement is not an artifact of matching a particular functional form.

Taken together, these three ablations show that the scheduler is robust to the choice of $f^\star$, $\sigma_F^2$, and the schedule family: broad neighborhoods of any reasonable configuration all outperform the best manually tuned warm-up, with no additional search required.

\begin{figure*}[h!]
    \centering
    \begin{subfigure}{0.32\textwidth}
        \includegraphics[width=\linewidth]{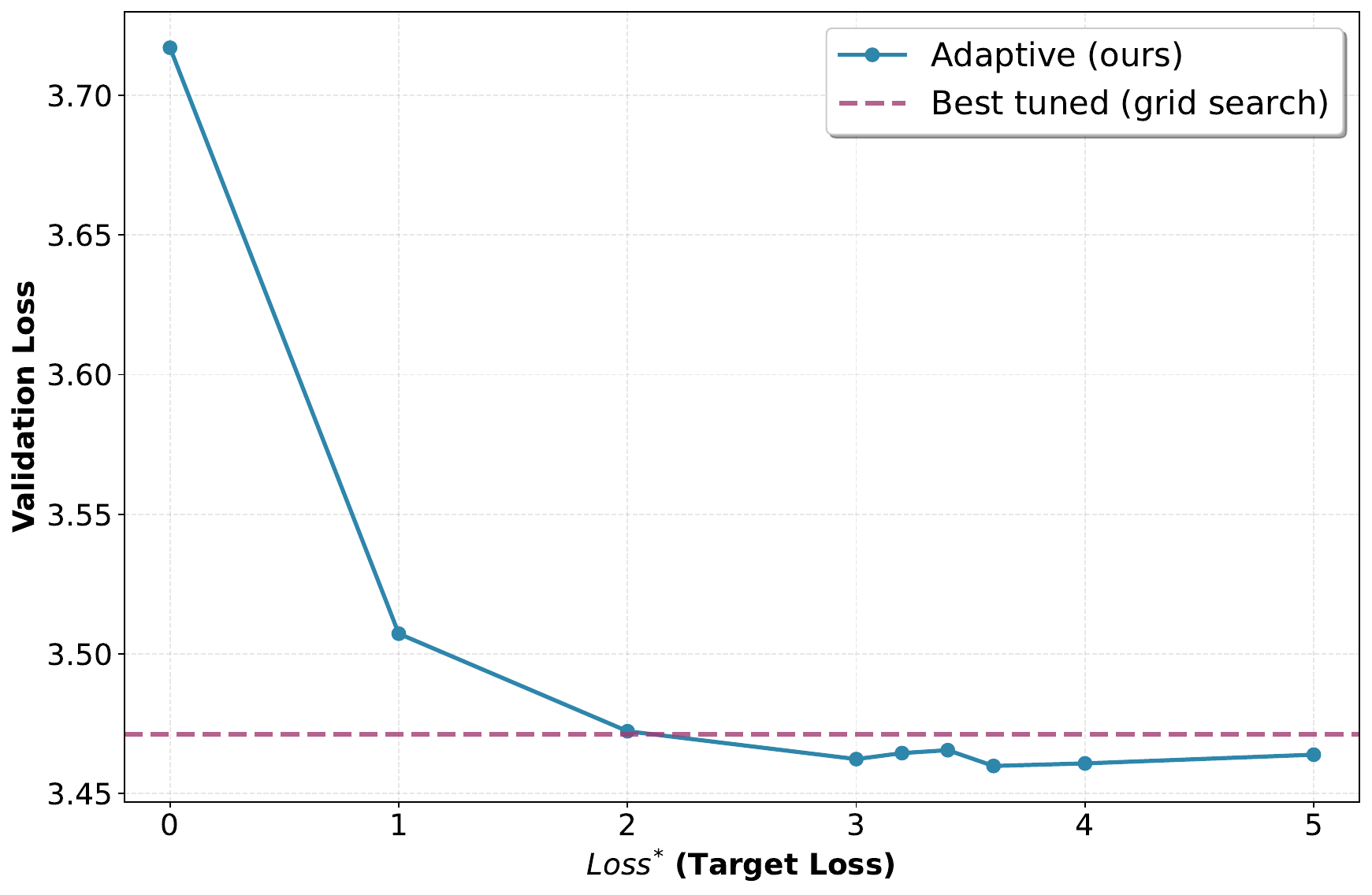}
        \caption{Target loss $f^\star$}
        \label{fig:abl-fstar}
    \end{subfigure}
    \hfill
    \begin{subfigure}{0.32\textwidth}
        \includegraphics[width=\linewidth]{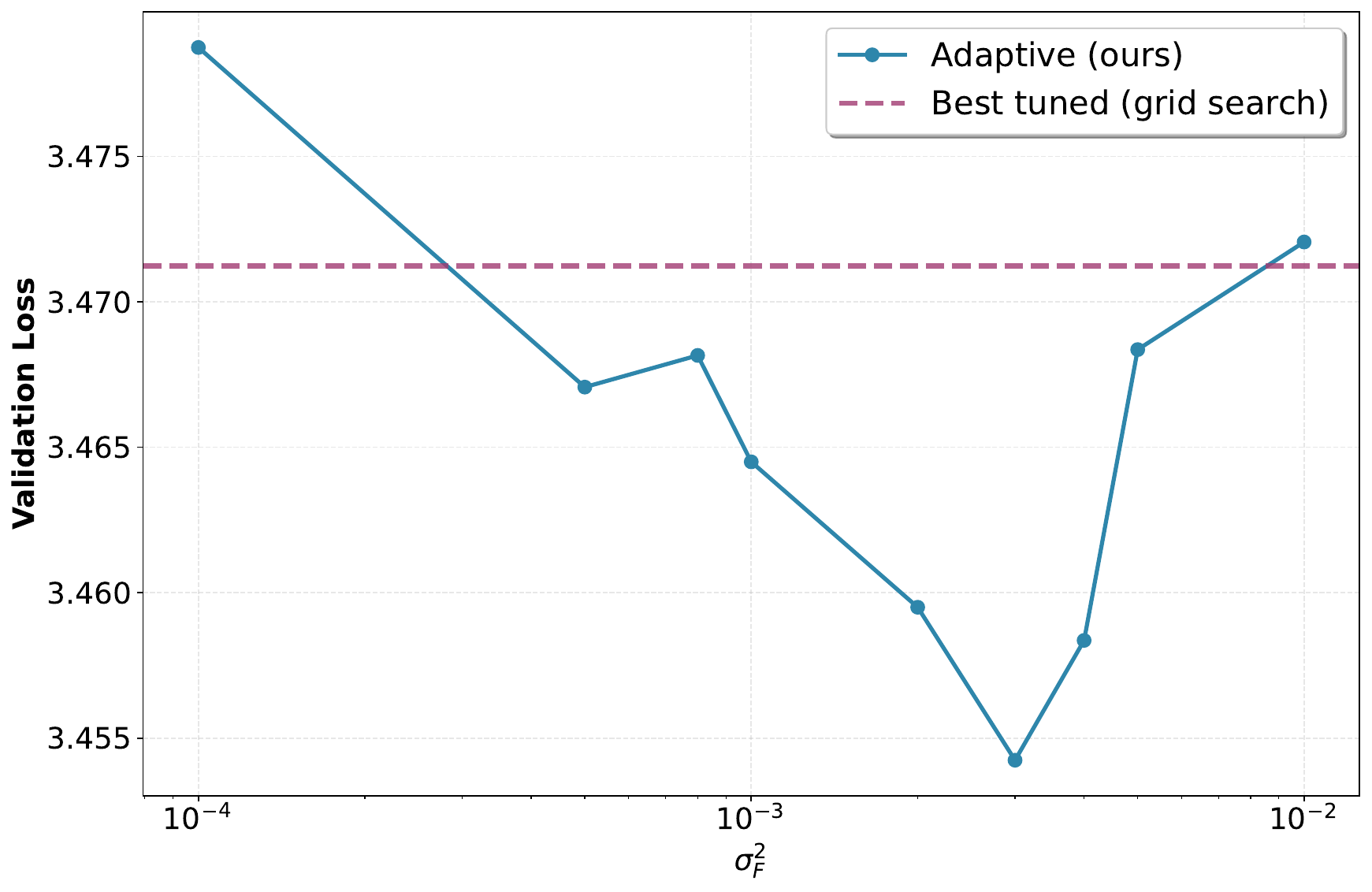}
        \caption{Frobenius variance $\sigma_F^2$}
        \label{fig:abl-sigma}
    \end{subfigure}
    \hfill
    \begin{subfigure}{0.32\textwidth}
        \includegraphics[width=\linewidth]{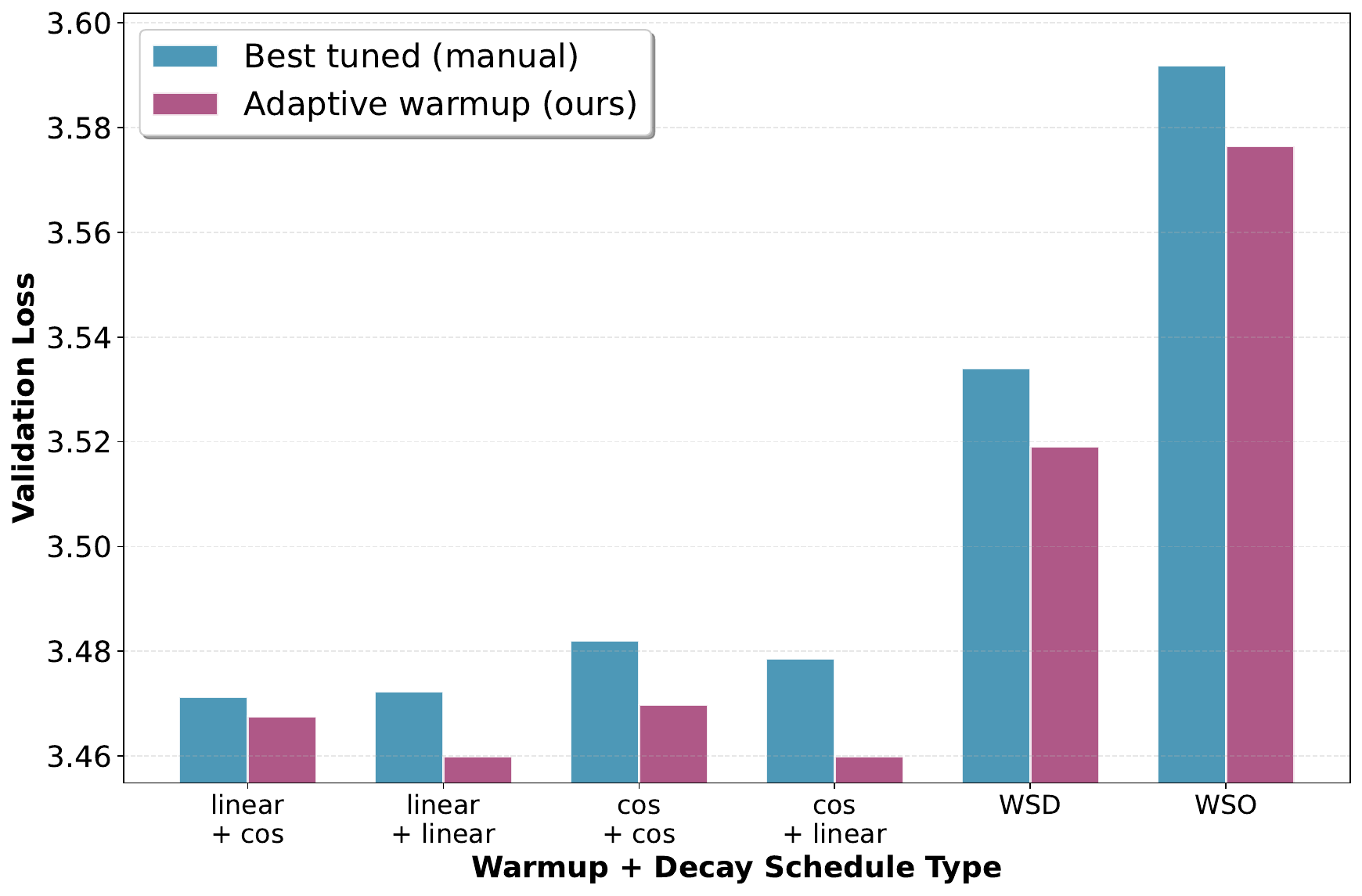}
        \caption{Schedule type $\eta_{\mathrm{trgt}}$}
        \label{fig:abl-sched}
    \end{subfigure}
    \caption{Ablation studies on LLaMA 124M pretraining with $\mathrm{bs}=32$ using Muon. The dashed line in (a) and (b) shows the best manually tuned warm-up; in (c), blue bars show the best manual warmup and red bars show the adaptive scheduler.}
    \label{fig:ablations}
\end{figure*}

\section{Conclusion}

We studied learning rate scheduling for LMO-based optimizers under a suboptimality-dependent smoothness assumption. Under this assumption, warm-up followed by decay emerges from the convergence proof rather than being imposed heuristically, with the theoretical schedule taking the rational form $\eta(\Delta) = \Delta / \mathcal{K}(\Delta)$.
We then derived a practical scheduler that determines the warm-up duration automatically from standard hyperparameters, with $\kappa$ computed analytically and a single fixed $\sigma_F^2$ across all setups.
Experiments on LLaMA 124M and 210M pretraining with Muon, Lion, and normSGD, and on Swin-B ImageNet training, show that the adaptive warm-up matches or exceeds the best manually tuned schedule in every configuration tested.

\newpage



\end{mainpart}

\newpage

\tableofcontents

\begin{appendixpart}

\section{Automatic Boundedness for the Euclidean Norm}
\label{app:boundedness}

The following lemma confirms that for the Euclidean norm, Assumption~\ref{ass:boundedness} is automatically satisfied by our learning rate schedule.

\begin{lemma}
\label{lem:boundedness}
Suppose Assumptions~\ref{ass:star_convexity} and~\ref{ass:smoothness2} hold under the Euclidean norm $\|\cdot\| = \|\cdot\|_2$. Consider the iterates $x^t$ generated by \eqref{eq:update_rule_1} with $g^t = \nabla f(x^t)$ and learning rate $\eta^t = \frac{\Delta^t}{D \cdot \mathcal{K}(x^t)}$, where $D := \|x^0 - x^\star\|_2$.
Then $\|x^t - x^\star\|_2 \le D$ for all $t \ge 0$.
\end{lemma}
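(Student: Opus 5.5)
We argue by induction on $t$ that $\|x^t - x^\star\|_2 \le D$. The base case $t=0$ is the definition $D = \|x^0 - x^\star\|_2$. For the inductive step we show the stronger monotonicity $\|x^{t+1}-x^\star\|_2 \le \|x^t - x^\star\|_2$. Under the Euclidean norm the update \eqref{eq:update_rule_1} reads $x^{t+1} = x^t - \eta^t \nabla f(x^t)/\|\nabla f(x^t)\|_2$; if $\nabla f(x^t)=0$ then by star-convexity $x^t$ is a minimizer, $\Delta^t = 0$, $\eta^t=0$, and nothing moves. Expanding the square gives
\begin{equation*}
\|x^{t+1}-x^\star\|_2^2 = \|x^t-x^\star\|_2^2 - \frac{2\eta^t}{\|\nabla f(x^t)\|_2}\langle \nabla f(x^t), x^t - x^\star\rangle + (\eta^t)^2 .
\end{equation*}
It therefore suffices to prove $(\eta^t)^2 \le \frac{2\eta^t}{\|\nabla f(x^t)\|_2}\langle \nabla f(x^t), x^t-x^\star\rangle$.

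We bound the two ingredients separately. Star-convexity (Assumption~\ref{ass:star_convexity}), in its differentiable form obtained by letting $\beta\to 0$ in the defining inequality, gives $\langle \nabla f(x^t), x^t - x^\star\rangle \ge f(x^t) - f^\star = \Delta^t$. For the gradient norm we apply Assumption~\ref{ass:smoothness2} with $x = x^t$, $y = x^\star$, using $\nabla f(x^\star)=0$:
\begin{equation*}
\|\nabla f(x^t)\|_2 \le \mathcal{K}(x^t)\,\|x^t - x^\star\|_2 \le \mathcal{K}(x^t)\, D,
\end{equation*}
where the last step uses the induction hypothesis. Here it matters that $\mathcal{K}$ is evaluated at $x = x^t$, which is exactly the quantity appearing in the stepsize.

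Combining these, $\frac{2}{\|\nabla f(x^t)\|_2}\langle \nabla f(x^t), x^t-x^\star\rangle \ge \frac{2\Delta^t}{D\,\mathcal{K}(x^t)} = 2\eta^t \ge \eta^t$. Multiplying by $\eta^t \ge 0$ yields the required inequality, so $\|x^{t+1}-x^\star\|_2 \le \|x^t-x^\star\|_2 \le D$ and the induction closes. In fact there is a factor of two to spare: we even get $\|x^{t+1}-x^\star\|_2^2 \le \|x^t-x^\star\|_2^2 - (\eta^t)^2$.

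The only delicate point is the circularity: the gradient bound needs $\|x^t-x^\star\|_2\le D$, which is precisely what is being proved. The induction resolves this, since at step $t$ we only use the bound for $x^t$ itself. One must also check that Assumption~\ref{ass:smoothness2} is applied in the orientation with $\mathcal{K}$ evaluated at $x^t$ rather than at $x^\star$; the latter would give only $K_0$ and would not match the schedule. Finally, the degenerate case $\nabla f(x^t)=0$ needs the separate treatment given above, where the LMO is undefined but $\eta^t = 0$.
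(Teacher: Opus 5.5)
Your proof is correct and is the paper's argument: induction, expanding $\|x^{t+1}-x^\star\|_2^2$, the star-convexity bound $\langle \nabla f(x^t), x^t-x^\star\rangle \ge \Delta^t$, and the bound $\|\nabla f(x^t)\|_2 \le \mathcal{K}(x^t)\, D$ from Assumption~\ref{ass:smoothness2} together with the inductive hypothesis, which yields the same decrement $(\eta^t)^2 = (\Delta^t)^2/(D^2\mathcal{K}(x^t)^2)$. Your version is marginally sharper: keeping $\|x^t-x^\star\|_2^2$ rather than replacing it by $D^2$ gives monotonicity of the distance, and you handle the degenerate case $\nabla f(x^t)=0$, which the paper sidesteps by assuming $\Delta^t>0$.
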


\begin{proof}
We proceed by induction on $t$. The base case $t=0$ holds trivially since $\|x^0 - x^\star\|_2 = D$.

For the inductive step, assume $\|x^t - x^\star\|_2 \le D$. From the update rule \eqref{eq:update_rule_1} with the Euclidean norm:
\[
x^{t+1} = x^t - \eta^t \frac{\nabla f(x^t)}{\|\nabla f(x^t)\|_2}.
\]
Using the star-convexity condition (Assumption~\ref{ass:star_convexity}):
\[
\langle \nabla f(x^t), x^t - x^\star \rangle \ge f(x^t) - f^\star = \Delta^t > 0.
\]
We compute:
\begin{align*}
\|x^{t+1} - x^\star\|_2^2 &= \|x^t - x^\star\|_2^2 - 2\eta^t \frac{\langle \nabla f(x^t), x^t - x^\star \rangle}{\|\nabla f(x^t)\|_2} + (\eta^t)^2 \\
&\le \|x^t - x^\star\|_2^2 - 2\eta^t \frac{\Delta^t}{\|\nabla f(x^t)\|_2} + (\eta^t)^2.
\end{align*}
By Assumption~\ref{ass:smoothness2}, $\|\nabla f(x^t)\|_2 \le \mathcal{K}(x^t) \|x^t - x^\star\|_2 \le \mathcal{K}(x^t) D$.
With $\eta^t = \frac{\Delta^t}{D \cdot \mathcal{K}(x^t)}$:
\begin{align*}
\|x^{t+1} - x^\star\|_2^2 &\le D^2 - 2 \cdot \frac{\Delta^t}{D \cdot \mathcal{K}(x^t)} \cdot \frac{\Delta^t}{\mathcal{K}(x^t) D} + \frac{(\Delta^t)^2}{D^2 \mathcal{K}(x^t)^2} \\
&= D^2 - \frac{2(\Delta^t)^2}{D^2 \mathcal{K}(x^t)^2} + \frac{(\Delta^t)^2}{D^2 \mathcal{K}(x^t)^2} \\
&= D^2 - \frac{(\Delta^t)^2}{D^2 \mathcal{K}(x^t)^2} \le D^2.
\end{align*}
Thus $\|x^{t+1} - x^\star\|_2 \le D$, completing the induction.
\end{proof}

\section{Empirical Motivation for Assumption~\ref{ass:smoothness2}}
\label{app:smooth}

To complement the main-text visualization based on Lion (Figure \ref{fig:smoothness-lion}), we report the same Lipschitz–style diagnostic for Muon and normalized SGD. For each optimizer, we plot the empirical ratio
\[
\mathcal{K}^t \;=\;
\frac{\|\nabla f(x^{t+1}) - \nabla f(x^{t})\|_\star}{\|x^{t+1}-x^{t}\|}
\qquad\text{versus}\qquad
\Delta^t = f(x^t)-f^\star,
\]
and overlay a fitted curve of the form
\[
K_0 + K_1 \Delta + K_2 \Delta^2.
\]

The resulting trajectories, shown in Figures~\ref{fig:smoothness-lion} and \ref{fig:lips-muon-norm}, reveal three consistent phenomena.

\begin{enumerate}
\item \textbf{Suboptimality-dependent curvature is essential.}
Across all optimizers, the empirical dependence of $\mathcal{K}^t$ on $\Delta_t$ exhibits clear curvature, and the quadratic term dominates the linear component except near the beginning of training. This confirms that $K_\rho$ must be strictly positive in Assumption~\ref{ass:smoothness2}.

\item \textbf{Different optimizers induce different curvature profiles.}
The magnitude and slope of $\mathcal{K}^t$ vary substantially across methods. Muon produces smoother trajectories with a tight parabolic fall-off, while normSGD displays a steeper and higher-variance decrease as training progresses. These differences reflect the underlying geometry of each update rule—LMO direction choice, normalization scale, and implicit conditioning—and highlight why a single global Lipschitz constant would poorly approximate training dynamics.

\item \textbf{The trend is optimizer-universal.}
Even though the shapes differ, both optimizers exhibit the same qualitative pattern as Lion in the main text: the curvature decreases monotonically with $\Delta_t$ and cannot be explained by a constant Lipschitz term alone. That is, \emph{all three} optimizers behave as predicted by Assumption~\ref{ass:smoothness2}, but with optimizer-dependent constants.
\end{enumerate}

Overall, these additional results reinforce that $K_\rho>0$ is not an analytical convenience but an empirically grounded property of modern LMO optimizers, and that the precise smoothness structure is optimizer-specific—supporting our choice to estimate $K_0,K_1,K_2$ rather than assume a fixed curvature model.

\begin{figure*}[h]
    \centering
    \begin{subfigure}{0.48\textwidth}
        \includegraphics[width=\linewidth]{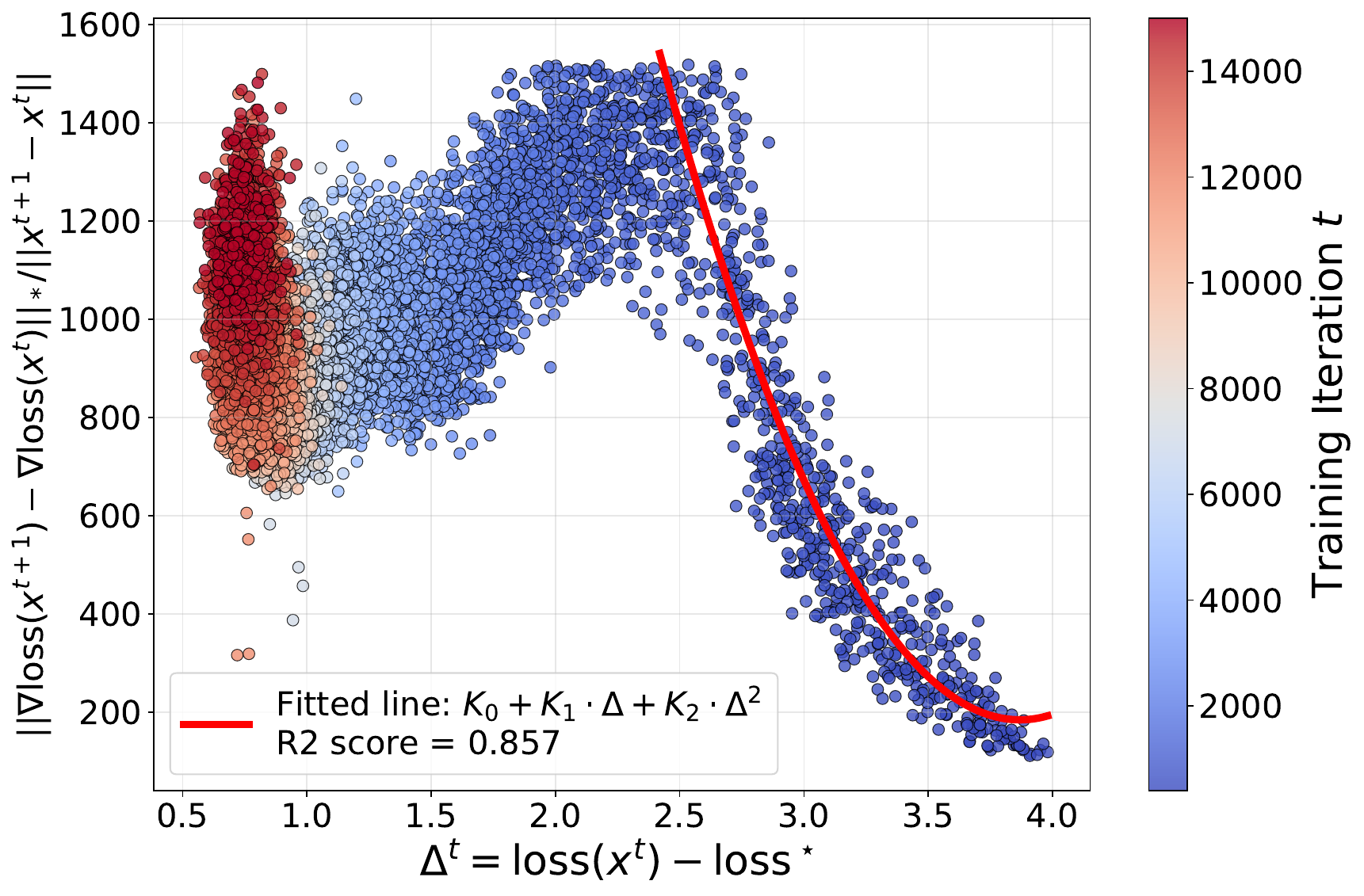}
        \caption{Muon}
    \end{subfigure}
    \hfill
    \begin{subfigure}{0.48\textwidth}
        \includegraphics[width=\linewidth]{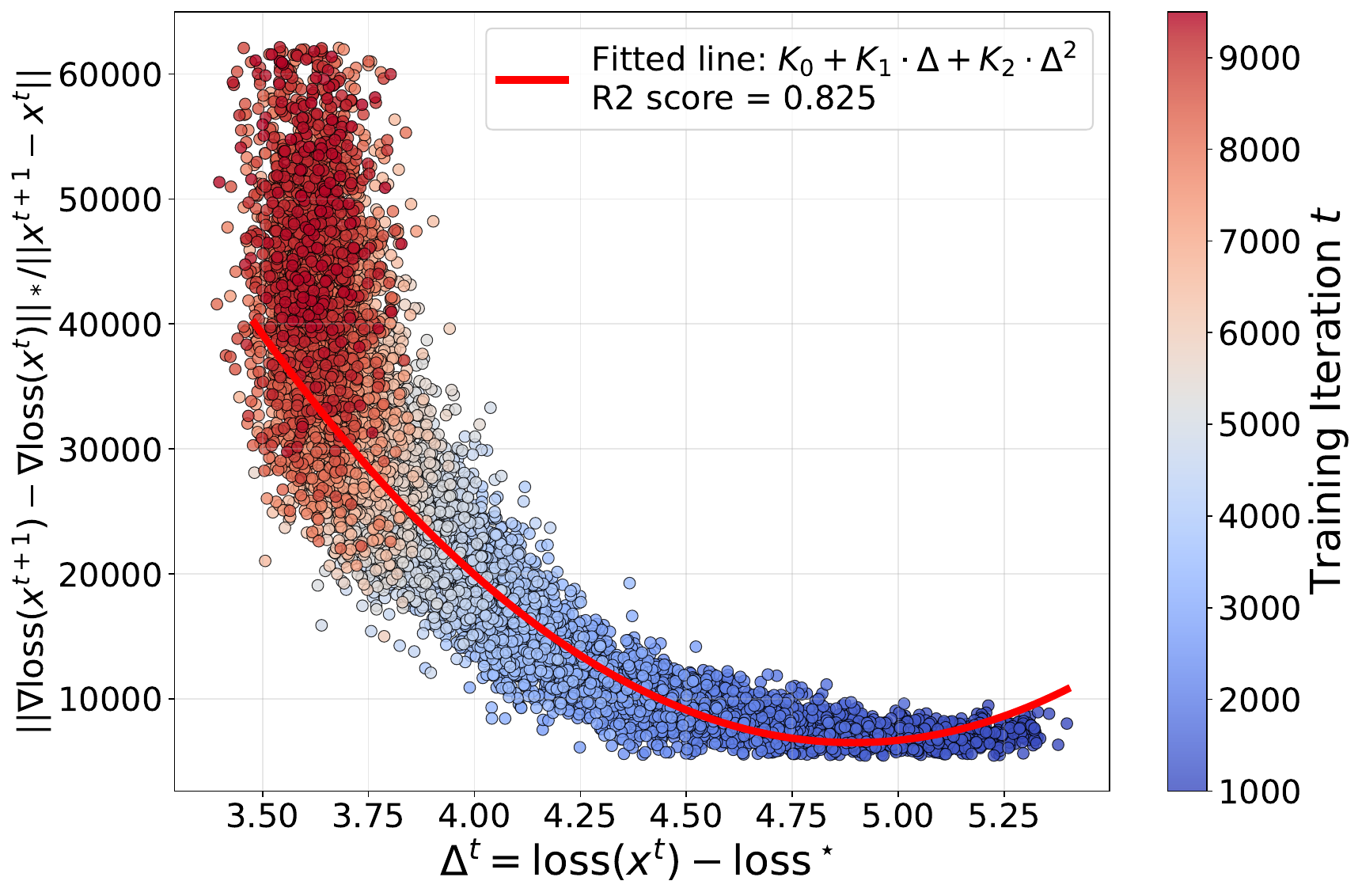}
        \caption{normSGD}
    \end{subfigure}
    \caption{Empirical smoothness ratio $\mathcal{K}^t$ versus suboptimality gap $\Delta_t$ for Muon and normalized SGD. The fitted curve $K^0 + K^1\Delta + K^2\Delta^2$ (red) demonstrates that a quadratic term is necessary to explain the observed trajectory, and that the magnitude and curvature differ across optimizers.}
    \label{fig:lips-muon-norm}
\end{figure*}

\textbf{Experimental setup.}
All measurements are obtained using the same model, data pipeline, and optimizer hyperparameters as in Section~\ref{sec:exp}, with settings inherited from~\cite{semenov2025benchmarking}. The only aspect that varies across optimizers is the number of optimization steps observed. For each iteration, we estimate $\Delta^t = f_{\xi^t}(x^t)-f^\star$ and compute
\[
\mathcal{K}^t_{\xi^t} \;=\; \frac{\|\nabla f_{\xi^t}(x^{t+1}) - \nabla f_{\xi^t}(x^t)\|_\star}{\|x^{t+1}-x^t\|}
\]
on the \emph{same} mini-batch $\xi^t$ of size $64$ by performing an additional forward and backward pass, ensuring that the difference is not corrupted by batch-to-batch noise. While both $\Delta^t$ and $\mathcal{K}^t_{\xi^t}$ remain stochastic due to minibatch sampling, the batch is sufficiently large for these quantities to provide faithful estimates of the smoothness trend under the same training regime used in the main experiments.

\subsection{Relation to Edge of Stability}
\label{app:eos}

The Edge of Stability (EoS) phenomenon~\citep{cohen2021gradient, damian2023selfstabilization} refers to the empirical observation that, during full-batch gradient descent, the top Hessian eigenvalue $\lambda_{\max}(\nabla^2 f(x^t))$ rises to approximately $2/\eta$ and then stabilizes.
At first glance this appears to contradict Assumption~\ref{ass:smoothness2}, under which the local smoothness $\mathcal{K}(x^t) = K_0 + K_1\Delta^t + K_\rho(\Delta^t)^\rho$ decreases as training proceeds.
Three observations resolve this.

First, $\mathcal{K}^t$ and $\lambda_{\max}(\nabla^2 f(x^t))$ measure different things.
The top eigenvalue is the worst-case curvature over all directions.
The ratio $\mathcal{K}^t = \|\nabla f(x^{t+1}) - \nabla f(x^t)\|_\star / \|x^{t+1} - x^t\|$ is the gradient change specifically along the LMO update direction $x^{t+1} - x^t$, which is constrained to the unit ball of the optimization norm.
For LMO methods, this direction avoids the high-curvature subspace where $\lambda_{\max}$ is attained.
The two quantities can move in opposite directions simultaneously.

Second, EoS has been studied exclusively for full-batch gradient descent~\citep{cohen2021gradient, damian2023selfstabilization}.
LMO methods such as Muon apply Newton-Schulz orthogonalization, projecting the gradient onto the spectral norm ball~\citep{jordan2024muon, chen2026muon}.
This fundamentally changes the update direction relative to gradient descent, and no EoS analog has been established for such methods.

Third, Assumption~\ref{ass:smoothness2} is a global technical condition required for the convergence proof.
Our empirical validation in Figures~\ref{fig:smoothness-lion} and \ref{fig:lips-muon-norm} measures $\mathcal{K}^t$ along the actual LMO optimization trajectory, where the assumption is relevant.
Global worst-case sharpness (over all directions) can grow while directional smoothness along LMO steps decreases, and this is precisely what EoS and Assumption~\ref{ass:smoothness2} each capture in their respective domains.

\section{Connection to Specific Optimizers: Detailed Derivations}
\label{app:optimizers}

This general LMO-based framework in \eqref{eq:update_rule_1} unifies several popular optimizers by instantiating different norms $\|\cdot\|$ on the parameter space. Table~\ref{tab:optimizers} summarizes the correspondence.

\begin{table}[h]
\centering
\begin{tabular}{@{}llll@{}}
\toprule
\textbf{Optimizer} & Norm & Dual & $\mathrm{LMO}(g)$ \\
\midrule
normSGD & $\ell_2$ & $\ell_2$ & $-g/\|g\|$ \\
signSGD/Lion & $\ell_{\infty}$ & $\ell_1$ & $-\mathrm{sign}(g)$ \\
Muon & Spectral & Nuclear & $-UV^{\top}$ \\
Layer-wise & $\max_i\|{\cdot}\|_{(i)}$ & $\sum_i\|{\cdot}\|_{(i),\star}$ & per-layer \\
\bottomrule
\end{tabular}
\caption{Norm choice and corresponding optimizers. The update rule \eqref{eq:update_rule_1} recovers each optimizer.}
\label{tab:optimizers}
\end{table}

Recall that the linear minimization oracle is
\[
\operatorname{LMO}(g) = \arg\min_{\|t\|=1} \langle g, t \rangle = -\frac{g^{\star}}{\|g^{\star}\|},
\]
where $g^{\star}$ is the subgradient of the dual norm $\|g\|_{\star} = \max_{\|t\|=1} \langle g, t \rangle$. Below we derive the LMO for each norm choice.

\paragraph{Normalized Gradient Descent ($\ell_2$ norm).}
For the Euclidean norm $\|\cdot\| = \|\cdot\|_2$, the dual norm is also $\ell_2$ (self-duality), and $\operatorname{LMO}(g) = -g/\|g\|_2$. The update \eqref{eq:update_rule_1} becomes:
\[
x^{t+1} = x^t - \eta^t \frac{\nabla f(x^t)}{\|\nabla f(x^t)\|_2},
\]
which is exactly \textbf{Normalized Gradient Descent} \citep{nesterov1984minimization,hazan2015beyond,levy2016power}. This method decouples the direction from the magnitude, ensuring unit-norm steps.

\paragraph{signSGD ($\ell_{\infty}$ norm).}
For the $\ell_{\infty}$ norm $\|t\|_{\infty} = \max_i |t_i|$, the dual norm is $\ell_1$: $\|g\|_{\star} = \|g\|_1 = \sum_i |g_i|$. The LMO has the closed form:
\[
\operatorname{LMO}(g) = -\operatorname{sign}(g),
\]
where $\operatorname{sign}(g)_i = \operatorname{sign}(g_i) \in \{-1, 0, +1\}$. The update \eqref{eq:update_rule_1} becomes:
\[
x^{t+1} = x^t - \eta^t \operatorname{sign}(\nabla f(x^t)),
\]
which is exactly \textbf{signSGD} \citep{bernstein2018signsgd}. This connection was noted by \citet{bernstein2024old}, who observed that signSGD is steepest descent under the $\ell_{\infty}$ norm.

\paragraph{Lion ($\ell_{\infty}$ norm with momentum).}
Lion~\citep{chen2023lion} extends signSGD by incorporating momentum:
\[
m^{t+1} = \beta_2 m^t + (1-\beta_2) \nabla f(x^t), \quad x^{t+1} = x^t - \eta^t \operatorname{sign}(\beta_1 m^t + (1-\beta_1) \nabla f(x^t)).
\]
Since the update direction is still given by the sign function, Lion corresponds to the $\ell_{\infty}$ norm geometry with a momentum-averaged gradient. This optimizer was discovered via symbolic search and has demonstrated strong performance across vision, language, and diffusion models while being more memory-efficient than Adam.

\paragraph{Muon (spectral norm on matrices).}
For matrix-valued parameters $W \in \mathbb{R}^{m \times n}$, the spectral norm is $\|W\|_{\text{op}} = \sigma_{\max}(W)$ (largest singular value). Its dual is the nuclear norm $\|G\|_{\star} = \sum_i \sigma_i(G)$ (sum of singular values). The LMO is the orthogonalized gradient:
\[
\operatorname{LMO}(G) = -UV^{\top},
\]
where $G = U\Sigma V^{\top}$ is the SVD. The update \eqref{eq:update_rule_1} becomes:
\[
W^{t+1} = W^t - \eta^t \operatorname{Ortho}(\nabla f(W^t)),
\]
where $\operatorname{Ortho}(G) = UV^{\top}$ is the nearest (semi-)orthogonal matrix to $G$. This is exactly the \textbf{Muon} optimizer \citep{jordan2024muon}, which uses Newton-Schulz iterations to efficiently approximate the orthogonalization. Recent work has scaled Muon to large language models \citep{liu2025muonscalablellmtraining}.

\paragraph{Layer-wise optimizers (supremum norm over layers).}
In practice, modern neural networks consist of $L$ layers with parameters $(W_1, \ldots, W_L)$, where each layer may have different dimensions and structure. A natural approach is to apply different norms to different layers and combine them via the supremum:
\[
\|(W_1, \ldots, W_L)\| := \max_{i=1,\ldots,L} \|W_i\|_{(i)},
\]
where $\|\cdot\|_{(i)}$ denotes the norm chosen for the $i$-th layer. The dual norm is then
\[
\|(G_1, \ldots, G_L)\|_{\star} = \sum_{i=1}^{L} \|G_i\|_{(i),\star},
\]
and the LMO decomposes layer-wise:
\[
\operatorname{LMO}(G_1, \ldots, G_L) = \left( \operatorname{LMO}_{(i)}(G_i) \right)_{i=1}^{L},
\]
where $\operatorname{LMO}_{(i)}$ is the LMO corresponding to the norm $\|\cdot\|_{(i)}$.

This framework allows combining different optimizers for different layers---for example, using Muon (spectral norm) for large attention matrices, signSGD/Lion ($\ell_\infty$ norm) for embedding layers, and Normalized GD ($\ell_2$ norm) for small dense layers. Such hybrid strategies are commonly used in practice~\citep{jordan2024muon,liu2025muonscalablellmtraining}.

\section{Details of the Adaptive Scheduler from Section~\ref{sec:our_sceduler}}

\subsection{Closed-form expressions for \texorpdfstring{$\kappa$}{kappa}}
\label{app:kappas}

Recall that our learning-rate matching objective uses a Gaussian weight
\[
\tilde{\Delta} \sim \mathcal{N} \bigl(\Delta';\sigma_F^2 / \kappa\bigr),
\]
where $\kappa$ controls the scale of the effective step in Frobenius norm. Formally,
\[
\kappa \;:=\;
\sup_{\|u\| = 1} \bigl\{\|u\|_F^2\bigr\},
\]
where $\|\cdot\|$ is the norm used inside the LMO update and $\|\cdot\|_F$ is the Frobenius norm. In our implementation, parameters are grouped by layers, and the global optimization norm is defined as the maximum over layers of the per-layer norm. Consider a model consisting of layers indexed by $\ell = 1,\dots,L$, where the $\ell$-th weight matrix has dimensions $m_\ell \times n_\ell$.  
Under the max-over-layers geometry used throughout, the closed-form expressions for $\kappa$ are:

\begin{itemize}
\item \textbf{Muon (spectral norm).}  
Each layer uses the spectral norm $\|\cdot\|_{\mathrm{op}}$ and its dual, the nuclear norm $\|\cdot\|_*$.  
For matrices with operator norm bounded by~$1$,
\[
\sup_{\|A\|_{\mathrm{op}}=1} \|A\|_F^2 = \mathrm{rank}(A) = \min(m_\ell,n_\ell),
\]
and the nuclear norm satisfies $\|A\|_* \le \mathrm{rank}(A) \|A\|_{\mathrm{op}}$.  
With a max-over-layers constraint, the total worst-case Frobenius contribution sums across layers:
\[
\kappa_{\mathrm{Muon}} = \sum_{\ell=1}^{L} \min(m_\ell,n_\ell).
\]

\item \textbf{Lion / Sign-type (entrywise $\ell_\infty$ norm).}  
Here the layerwise optimization norm is $\|\cdot\|_\infty$, and the dual is $\|\cdot\|_1$.  
For a matrix with $d_\ell = m_\ell n_\ell$ entries,
\[
\sup_{\|A\|_\infty = 1} \|A\|_F^2 = d_\ell,
\qquad
\|A\|_1 \le d_\ell \|A\|_\infty.
\]
Accumulating across layers yields
\[
\kappa_{\mathrm{Lion}} = \sum_{\ell=1}^{L} m_\ell n_\ell.
\]

\item \textbf{Normalized SGD (Frobenius norm).}  
In this case, the per-layer norm is already the Frobenius norm, and the global norm is
\[
\|W\| = \max_\ell \|W_\ell\|_F.
\]
The corresponding dual norm aggregates Frobenius contributions over layers, giving
\[
\sup_{\max_\ell \|U_\ell\|_F = 1} \sum_{\ell=1}^L \|U_\ell\|_F^2 = L.
\]
Thus,
\[
\kappa_{\mathrm{normSGD}} = L.
\]
\end{itemize}

Intuitively, $\kappa$ measures the worst-case amplification of the effective step when mapping from the LMO geometry to Frobenius scale. The differences above reflect the fundamental geometry of the optimizers: spectral structure for Muon, elementwise bounds for Lion, and layerwise normalization for normalized SGD. These forms of~$\kappa$ are used in all our experiments (see Section~\ref{sec:exp}).

\subsection{Closed-form solution for \texorpdfstring{$K_0,K_1,K_2$}{K0, K1, K2} and selection of \texorpdfstring{$\Delta'$}{Delta prime}}
\label{app:K_012}

We work with the practical schedule
\[
\eta(\Delta)
=
\frac{\Delta}{K_0 + K_1 \Delta + K_2 \Delta^2},
\]
with three unknown coefficients $K_0,K_1,K_2$ and three constraints:

\begin{enumerate}
\item[(i)] $\eta'(\Delta')=0$ (unique critical point at $\Delta'$),
\item[(ii)] $\eta(\Delta') = \mathrm{lr}$ (peak learning rate),
\item[(iii)] $\eta(\Delta^0) = \mathrm{lr}/\mathrm{div}$ (initial warmup value).
\end{enumerate}

\paragraph{Step 1: critical point.}
Differentiating $\eta(\Delta)$,
\[
\eta'(\Delta)
=
\frac{(K_0 + K_1 \Delta + K_2 \Delta^2) - \Delta(K_1 + 2K_2\Delta)}
     {(K_0 + K_1 \Delta + K_2 \Delta^2)^2}
=
\frac{K_0 - K_2 \Delta^2}{(K_0 + K_1 \Delta + K_2 \Delta^2)^2}.
\]
Setting $\eta'(\Delta')=0$ yields
\begin{equation}
K_0 = K_2 (\Delta')^2.
\label{eq:K0}
\end{equation}

\paragraph{Step 2: value constraints.}
Condition (ii) gives
\[
\frac{\Delta'}{K_0 + K_1 \Delta' + K_2 (\Delta')^2} = \mathrm{lr}.
\]
Using \eqref{eq:K0},
\[
\frac{\Delta'}{2K_2(\Delta')^2 + K_1\Delta'}
=
\mathrm{lr}
\quad\Longrightarrow\quad
1 = \mathrm{lr}(2K_2\Delta' + K_1).
\]
Solving for $K_1$:
\begin{equation}
K_1
=
\frac{1}{\mathrm{lr}} - 2K_2\Delta'.
\label{eq:K1}
\end{equation}

Condition (iii) gives
\[
\frac{\Delta^0}{K_0 + K_1\Delta^0 + K_2(\Delta^0)^2}
=
\frac{\mathrm{lr}}{\mathrm{div}}.
\]
Substituting \eqref{eq:K0},
\begin{equation}
\frac{\Delta^0}{K_2(\Delta')^2 + K_1\Delta^0 + K_2(\Delta^0)^2}
=
\frac{\mathrm{lr}}{\mathrm{div}}.
\label{eq:init_constraint}
\end{equation}

\paragraph{Step 3: solving for $K_2,K_1,K_0$.}
Insert \eqref{eq:K1} into \eqref{eq:init_constraint} and rearrange to obtain
\[
(\mathrm{div}-1)\Delta^0
=
\mathrm{lr} K_2(\Delta^0-\Delta')^2.
\]
Thus
\begin{equation}
K_2
=
\frac{\Delta^0(\mathrm{div}-1)}
     {\mathrm{lr}(\Delta^0-\Delta')^2}.
\label{eq:K2}
\end{equation}

Then
\begin{equation}
K_0
=
K_2(\Delta')^2
=
\frac{\Delta^0(\Delta')^2(\mathrm{div}-1)}
     {\mathrm{lr}(\Delta^0-\Delta')^2}.
\label{eq:K0_final}
\end{equation}

Finally use \eqref{eq:K1} and \eqref{eq:K2} to obtain
\begin{equation}
K_1
=
\frac{(\Delta^0)^2 - 2\Delta^0\Delta' \mathrm{div} + (\Delta')^2}
     {\mathrm{lr}(\Delta^0-\Delta')^2}.
\label{eq:K1_final}
\end{equation}

It is immediate to verify that \eqref{eq:K0}, \eqref{eq:K1_final}, \eqref{eq:K2} jointly satisfy (i)--(iii).

\paragraph{Step 4: selecting $\Delta'$.}
The coefficients above hold for any $\Delta'$ with $\Delta'\neq\Delta^0$, leaving $\Delta'$ as a free scalar.  
We choose it by minimizing the MSE between the schedule and a target warmup+decay curve:
\[
\Delta'
    =
    \arg\min_{\Delta > 0} \left\{
    \mathbb{E}_{\tilde{\Delta} \sim \mathcal{N}(\Delta; \sigma_F^2 / \kappa)}
        \bigl[(\eta(\tilde{\Delta}) - \eta_{\mathrm{trgt}}(\tilde{\Delta}))^2\bigr]
    \right\},
\]
In practice, we evaluate $1000$ candidate values of $\Delta'$, form $(K_0,K_1,K_2)$ using \eqref{eq:K2}--\eqref{eq:K1_final}, enforce constraints numerically, and choose the minimizer. This is done once at initialization and incurs negligible runtime overhead.

\section{Illustration of the Adaptive Scheduler}
\label{sec:scheduler_illustration}
For illustration, Figure~\ref{fig:scheduler-lion} shows the resulting learning rate profile of Algorithm~\ref{alg:lr_scheduler} instantiated with the Lion optimizer.
\begin{figure}[H]
    \centering
    \includegraphics[width=0.5\columnwidth]{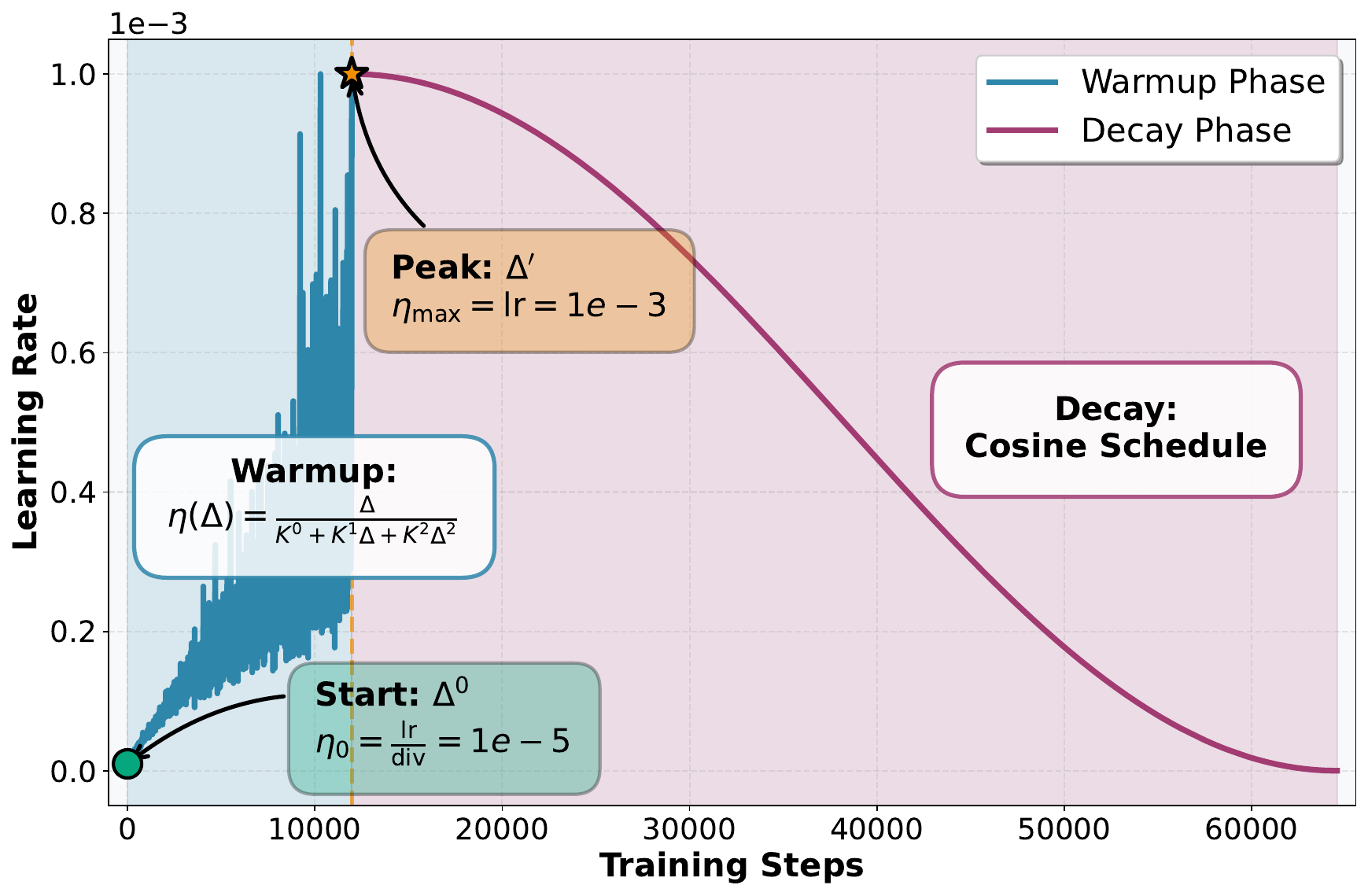}
    \caption{Example of learning rate schedule produced by Algorithm~\ref{alg:lr_scheduler} for the Lion optimizer on FineWeb. }
    \label{fig:scheduler-lion}
\end{figure}

\section{Connection \texorpdfstring{$(\rho, K_0, K_1, K_\rho)$}{(rho, K\_0, K\_1, K\_rho)}-smoothness with \texorpdfstring{$(\rho, L_0, L_\rho)$}{(rho, L\_0, L\_rho)}-smoothness}
\label{sec:connection_with_smoothness}

The following Lemma~\ref{lem:smoothness_equivalence} connects the standard $(\rho, L_0, L_\rho)$-smoothness (Assumption~\ref{ass:smoothness1}) with the $\left(\rho, K_0, K_1, K_\rho\right)$-smoothness (Assumption~\ref{ass:smoothness2}).

\begin{assumption}
\label{ass:smoothness1}
The function $f : \mathcal{X} \to \mathbb{R}$ is $(\rho, L_0, L_\rho)$-smooth, i.e., there exist $L_0, L_\rho \ge 0$, $\rho>0$ such that
\[
\|\nabla f(x) - \nabla f(y)\|_{\star} \le (L_0 + L_\rho \|\nabla f(x)\|^{\rho}_{\star}) \|x - y\|
\]
for all $x, y \in \mathcal{X}$.
\end{assumption}

\begin{lemma}
\label{lem:smoothness_equivalence}
If the function $f$ is $(\rho, L_0, L_\rho)$-smooth with $0<\rho<2$, then it is $\left(\frac{\rho}{2-\rho}, K_0, K_1, K_\rho\right)$-smooth.
\end{lemma}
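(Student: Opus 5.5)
The plan is to show that, under Assumption~\ref{ass:smoothness1} with $0<\rho<2$, the dual gradient norm satisfies
\[
\|\nabla f(x)\|_\star^{2-\rho} \lesssim L_0\,\Delta(x)^{1-\rho}\cdots,
\]
or more concretely a bound of the form
\[
\|\nabla f(x)\|_\star^{\rho} \le c_0 + c_1\,\Delta(x) + c_\rho\,\Delta(x)^{\rho/(2-\rho)}, \qquad \Delta(x):=f(x)-f^\star .
\]
Substituting this into $L_0 + L_\rho\|\nabla f(x)\|_\star^\rho$ then gives $\mathcal{K}(x)$ with exponent $\rho/(2-\rho)$. This is the standard ``gradient controlled by suboptimality'' lemma for generalized smooth functions, and it is the whole content of the proof.

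\textbf{Step 1 (descent-type inequality).} Fix $x$ and choose the direction $h$ with $\|h\|=1$ and $\langle \nabla f(x), h\rangle = -\|\nabla f(x)\|_\star$ (the LMO direction). For $y=x+s h$ with $s\ge0$, Assumption~\ref{ass:smoothness1} applied with base point $x$ bounds $\|\nabla f(x+\tau h)-\nabla f(x)\|_\star \le (L_0+L_\rho\|\nabla f(x)\|_\star^\rho)\tau$ for all $\tau$. This works because the constant depends only on $x$, which is the convenient feature of the assumption as stated. Integrating along the segment gives
\[
f^\star \le f(x+sh) \le f(x) - s\|\nabla f(x)\|_\star + \tfrac{s^2}{2}\big(L_0+L_\rho\|\nabla f(x)\|_\star^\rho\big).
\]
Setting $s=\|\nabla f(x)\|_\star/(L_0+L_\rho\|\nabla f(x)\|_\star^\rho)$ yields
\[
\|\nabla f(x)\|_\star^2 \le 2\,\Delta(x)\big(L_0+L_\rho\|\nabla f(x)\|_\star^\rho\big).
\]

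\textbf{Step 2 (case split).} Write $G=\|\nabla f(x)\|_\star$ and split into two cases.
\begin{itemize}
\item If $L_\rho G^\rho\le L_0$, then $G^2\le 4L_0\Delta$, so $G^\rho\le (4L_0\Delta)^{\rho/2}$.
\item Otherwise $G^2\le 4L_\rho\Delta\, G^\rho$, so $G^{2-\rho}\le 4L_\rho\Delta$, and since $\rho<2$ this gives $G^\rho\le (4L_\rho\Delta)^{\rho/(2-\rho)}$.
\end{itemize}
Hence
\[
G^\rho \le (4L_0\Delta)^{\rho/2} + (4L_\rho\Delta)^{\rho/(2-\rho)}.
\]

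\textbf{Step 3 (absorbing into the $K_0+K_1\Delta+K_\rho\Delta^{\rho/(2-\rho)}$ form).} The only mismatched term is $\Delta^{\rho/2}$. Because $\rho/2\in(0,1)$, Young's inequality (or simply $a^{\rho/2}\le 1+a$) gives $\Delta^{\rho/2}\le 1+\Delta$, which is absorbed into $K_0$ and $K_1\Delta$. We then set
\[
K_0 = L_0 + L_\rho(4L_0)^{\rho/2},\qquad K_1 = L_\rho(4L_0)^{\rho/2},\qquad K_\rho^{\mathrm{new}} = L_\rho(4L_\rho)^{\rho/(2-\rho)},
\]
with exponent $\rho/(2-\rho)$. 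Then
\[
\|\nabla f(x)-\nabla f(y)\|_\star \le (L_0+L_\rho G^\rho)\|x-y\| \le \mathcal{K}(x)\|x-y\|,
\]
which is Assumption~\ref{ass:smoothness2}.

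\textbf{Main obstacle.} The delicate step is Step 1. It requires the Lipschitz-type bound along the whole segment, and that the infimum $f^\star$ is finite so that $f(x+sh)\ge f^\star$. With the assumption stated with constant at the base point $x$ for all $y$, the integration is direct. If one only had a local version, one would need a Grönwall-type argument to control $\|\nabla f\|$ along the segment for $s$ of order $1/(L_0+L_\rho G^\rho)$; I would try that only if needed. The remaining steps are elementary case analysis.
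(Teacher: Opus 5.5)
Your proof is correct and is essentially the paper's argument: the same descent inequality with the LMO step $s=G/(L_0+L_\rho G^\rho)$, then a case split on $L_\rho G^\rho \le L_0$, with the second case raised to the power $\rho/(2-\rho)$. The only difference is the first case, where the paper uses $L_0 + L_\rho G^\rho \le 2L_0$ directly to get $K_0 = 2L_0$ and $K_1 = 0$; your detour through $(4L_0\Delta)^{\rho/2}$ and $\Delta^{\rho/2}\le 1+\Delta$ is valid but unnecessary.
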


\begin{proof}
It is easy to show that if $f$ is $(\rho, L_0, L_\rho)$-smooth, then for all $x, y \in \mathcal{X}$,
\[
f(y) \le f(x) + \langle \nabla f(x), y - x \rangle + \frac{L_0 + L_\rho \|\nabla f(x)\|^{\rho}_{\star}}{2} \|y - x\|^2.
\]
By taking $y = x + \frac{\|\nabla f(x)\|_{\star}}{L_0 + L_\rho \|\nabla f(x)\|^{\rho}_{\star}} \operatorname{LMO}(\nabla f(x))$, we have
$f^{\star} \le f(y) \le f(x) - \frac{\|\nabla f(x)\|_{\star}^2}{2L_0 + 2L_\rho \|\nabla f(x)\|^{\rho}_{\star}}$.
Denoting $g := \|\nabla f(x)\|_{\star}$ and $\Delta := f(x) - f^{\star}$, this gives $\Delta \ge \frac{g^2}{2(L_0 + L_\rho g^{\rho})}$.

We now upper bound $g^{\rho}$ in terms of $\Delta$. If $g^{\rho} \le \frac{L_0}{L_\rho}$, then trivially $g^{\rho} \le \frac{L_0}{L_\rho}$. Otherwise, $L_0 + L_\rho g^{\rho} \le 2L_\rho g^{\rho}$, hence $\Delta \ge \frac{g^{2-\rho}}{4L_\rho}$. Since $0 < \rho < 2$, raising both sides to the power $\frac{\rho}{2-\rho}$ gives $g^{\rho} \le (4L_\rho \Delta)^{\frac{\rho}{2-\rho}}$. Combining both cases via $\max\{a, b\} \le a + b$:
\[
g^{\rho} \le \frac{L_0}{L_\rho} + (4L_\rho \Delta)^{\frac{\rho}{2-\rho}}, \quad \text{so} \quad L_0 + L_\rho g^{\rho} \le 2L_0 + L_\rho(4L_\rho)^{\frac{\rho}{2-\rho}} \Delta^{\frac{\rho}{2-\rho}}.
\]
Plugging this into Assumption~\ref{ass:smoothness1} yields Assumption~\ref{ass:smoothness2} with exponent $\frac{\rho}{2-\rho}$, constants $K_0 := 2L_0$, $K_1 := 0$, and $K_\rho := L_\rho(4L_\rho)^{\frac{\rho}{2-\rho}}$.
\end{proof}

\section{Proofs}
\label{app:proofs}

\begin{corollary}
\label{cor:star_convex_gradient}
If $f$ is differentiable and star-convex (Assumption~\ref{ass:star_convexity}), then for all $x \in \mathcal{X}$:
\[
f(x) - f^\star \le \langle \nabla f(x), x - x^\star \rangle.
\]
\end{corollary}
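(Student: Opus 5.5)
The idea is to rewrite the star-convexity inequality of Assumption~\ref{ass:star_convexity} as a difference quotient along the segment from $x$ toward $x^\star$, and then let the step length go to zero. Fix $x \in \mathcal{X}$. For $\beta \in (0,1]$ set $z_\beta := x + \beta(x^\star - x) = \beta x^\star + (1-\beta)x$. Star-convexity gives
\[
f(z_\beta) \le \beta f^\star + (1-\beta) f(x),
\]
which after subtracting $f(x)$ from both sides and dividing by $\beta>0$ becomes
\[
\frac{f(x + \beta(x^\star - x)) - f(x)}{\beta} \le -(f(x) - f^\star).
\]

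Next I let $\beta \downarrow 0$. Because $f$ is differentiable at $x$, the left-hand side tends to the directional derivative $\langle \nabla f(x), x^\star - x\rangle$. The right-hand side does not depend on $\beta$, so the non-strict inequality survives the limit:
\[
\langle \nabla f(x), x^\star - x\rangle \le -(f(x) - f^\star).
\]
Multiplying by $-1$ gives exactly $f(x) - f^\star \le \langle \nabla f(x), x - x^\star\rangle$.

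The only point that needs care is the limit, namely that the Gâteaux directional derivative exists and equals $\langle \nabla f(x), \cdot\rangle$. This follows from differentiability, which Assumption~\ref{ass:smoothness2} already implies through the existence of $\nabla f$. The pairing $\langle\cdot,\cdot\rangle$ is the standard trace inner product on $\mathcal{X}$, so no norm choice enters the argument. I do not expect any real obstacle here.
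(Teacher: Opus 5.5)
Your proposal is correct and matches the paper's proof essentially step for step: rewrite star-convexity as a difference quotient along $x^\star - x$, divide by $\beta>0$, and pass to the limit $\beta \to 0^+$ to obtain $\langle \nabla f(x), x^\star - x\rangle \le -(f(x)-f^\star)$. The appeal to Assumption~\ref{ass:smoothness2} is unnecessary, since differentiability is already a hypothesis of the corollary.
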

\begin{proof}
From star-convexity, $f(x + \beta(x^\star - x)) - f(x) \le -\beta (f(x) - f^\star)$ for $\beta \in [0,1]$. Dividing by $\beta$ and taking $\beta \to 0^+$ yields the directional derivative $\langle \nabla f(x), x^\star - x \rangle \le -(f(x) - f^\star)$.
\end{proof}

\subsection{Proof of Theorem~\ref{thm:1}}
\begin{proof}
Denote $\mathcal{K}^t := K_0 + K_1 \Delta^t + K_\rho(\Delta^t)^\rho$. By Assumption \ref{ass:smoothness2},
\[
\Delta^{t+1} \le \Delta^t + \langle \nabla f(x^t), x^{t+1} - x^t \rangle + \frac{\mathcal{K}^t}{2} \|x^{t+1} - x^t\|^2.
\]
Since 
\[
\langle \nabla f(x^t), x^{t+1} - x^t \rangle = \eta^t \langle \nabla f(x^t), \operatorname{LMO}(\nabla f(x^t)) \rangle = - \eta^t \|\nabla f(x^t)\|_{\star},
\]
then
\[
\Delta^{t+1} \le \Delta^t - \eta^t \|\nabla f(x^t)\|_{\star} + \frac{\mathcal{K}^t}{2} (\eta^t)^2.
\]
By Corollary~\ref{cor:star_convex_gradient} and the boundedness of the iterates $x^t$, we have
$\Delta^t = f(x^t) - f^{\star} \le \langle \nabla f(x^t), x^t - x^{\star} \rangle \le \|\nabla f(x^t)\|_{\star} \|x^t - x^{\star} \| \le D  \|\nabla f(x^t)\|_{\star}$,
where $D>0$. Therefore,
\[
\Delta^{t+1} \le \Delta^t - \frac{\Delta^t}{D} \eta^t + \frac{\mathcal{K}^t}{2} (\eta^t)^2.
\]
The right hand side is minimized by setting $\eta^t = \frac{\Delta^{t}}{D \mathcal{K}^t}$, thus $\Delta^{t+1} \le \Delta^t$ and
\[
\Delta^{t+1} \le \Delta^{t}  - \frac{(\Delta^t)^2}{2D^2 \mathcal{K}^t} \le \Delta^{t}  - \frac{\Delta^t \Delta^{t+1}}{2D^2 \mathcal{K}^t}.
\]
Telescoping the sum after dividing both sides by $\Delta^t \Delta^{t+1}$ yields
$\sum_{t=0}^{T-1} \frac{1}{2D^2 \mathcal{K}^t} \le \frac{1}{\Delta^{T}} - \frac{1}{\Delta^0}$.
By the Cauchy-Schwarz inequality,
$\sum_{t=0}^{T-1} \frac{1}{2D^2 \mathcal{K}^t} \ge \frac{T^2}{\sum_{t=0}^{T-1}2D^2 \mathcal{K}^t}$,
which gives
$\frac{T^2}{\sum_{t=0}^{T-1}2D^2 \mathcal{K}^t} \le \frac{1}{\Delta^{T}} - \frac{1}{\Delta^0} \le \frac{1}{\Delta^{T}}$.
Therefore,
\[
\Delta^T \le \frac{2D^2\sum_{t=0}^{T-1}\mathcal{K}^t}{T^2}.
\]
It is easy to show for $\rho>1$ that since $\Delta^t$ is non-increasing, then the learning rate $\eta^t = \frac{\Delta^{t}}{D \mathcal{K}^t}$ is non-decreasing for $\Delta^t \ge \left(\frac{K_0}{(\rho-1)K_\rho}\right)^{\frac{1}{\rho}}$ (warm-up stage) and non-increasing thereafter (decay stage).
\end{proof}

\subsection{Proof of Theorem~\ref{thm:2}}
\begin{proof}
Denote $\mathcal{K}^t := K_0 + K_1 \Delta^t + K_\rho(\Delta^t)^\rho$. By Assumption \ref{ass:smoothness2},
\[
\Delta^{t+1} \le \Delta^t + \langle \nabla f(x^t), x^{t+1} - x^t \rangle + \frac{\mathcal{K}^t}{2} \|x^{t+1} - x^t\|^2.
\]

Note that $g(u)=\frac{u}{8(K_0 + K_1 u + K_\rho u^{\rho})}$, $u>0$, $\rho>1$, is maximized at $u=\left(\frac{K_0}{K_\rho(\rho - 1)}\right)^{\frac{1}{\rho}}$ with a maximum value of $\left[ 8 \left( \rho \left( \frac{K_0}{\rho - 1} \right)^{\frac{\rho - 1}{\rho}} K_{\rho}^{\frac{1}{\rho}} + K_1 \right) \right]^{-1}$, so $0<\lambda \eta^t \leq 1$ for $0< \lambda \le \frac{1}{\max\left(\|x^0\|, \|x^{\star}\|, 1/\lambda_{\max}\right)}$, where $\lambda_{\max}=\left[ 8 \left( \rho \left( \frac{K_0}{\rho - 1} \right)^{\frac{\rho - 1}{\rho}} K_{\rho}^{\frac{1}{\rho}} + K_1 \right) \right]^{1/2}$.



Let us prove $\|x^t\| \le 1/\lambda$ for all $t \ge 0$ by induction.
The induction base ($t=0$) holds trivially.
Assuming $\|x^t\| \le 1/\lambda$ holds, the update rule and the bound $\|\text{LMO}(g^t)\| \le 1$ yield:
\begin{equation*}
    \|x^{t+1}\| = \| (1-\lambda \eta^t)x^t + \eta^t \operatorname{LMO}(g^t) \| \le (1 - \lambda \eta^t)\|x^t\| + \eta^t \le (1 - \lambda \eta^t)\frac{1}{\lambda} + \eta^t = \frac{1}{\lambda},
\end{equation*}
where $1 - \lambda \eta^t \ge 0$, since $0<\lambda \eta^t \leq 1$. Thus, $\lambda \|x^t\| \le 1$ for all $t \ge 0$.

Using these bounds, we obtain
\begin{equation*}
    \|x^{t+1} - x^t\| = \eta^t \|\text{LMO}(g^t) - \lambda x^t\| \le \eta^t (1 + \lambda \|x^t\|) \le 2\eta^t,
\end{equation*}
and
\begin{equation*}
    \|x^t - x\| = \lambda \eta^t \|x^t - x^{\star}\| \le \eta^t (\lambda \|x^t\| + \lambda \|x^{\star}\|) \le 2\eta^t,
\end{equation*}
where $x := (1 - \lambda \eta^t)x^t + \lambda \eta^t x^{\star}$.

Therefore,
\[
\Delta^{t+1} \le \Delta^t + \langle \nabla f(x^t), x^{t+1} - x^t \rangle + 2\mathcal{K}^t (\eta^t)^2.
\]

Let us construct an upper bound for $\langle \nabla f(x^t), x^{t+1} - x^t \rangle$:
\begin{align*}
\langle \nabla f(x^t), x^{t+1} - x^t \rangle &= \langle \nabla f(x^t), x - x^t \rangle + \eta^t \langle \nabla f(x^t), \operatorname{LMO}(\nabla f(x^t)) - \lambda x^{\star} \rangle \\
&\le \langle \nabla f(x^t), x - x^t \rangle - \eta^t \| \nabla f(x^t) \|_{\star} + \eta^t \lambda \|x^{\star}\| \| \nabla f(x^t) \|_{\star} \\
& \le \langle \nabla f(x^t), x - x^t \rangle.
\end{align*}

By Assumption \ref{ass:smoothness2}, 
\[
\langle \nabla f(x^t), x - x^t \rangle \le f(x) - f(x^t) + \frac{\mathcal{K}^t}{2} \|x^t - x\|^2 \le f(x) - f(x^t) + 2\mathcal{K}^t (\eta^t)^2.
\]

By Assumption \ref{ass:star_convexity}, 
\[
f(x) - f(x^t) = f((1-\lambda \eta^t)x^t+\lambda \eta^t x^{\star}) - f(x^t) \le -\lambda \eta^t \Delta^t.
\]

Combining these:
\[
\langle \nabla f(x^t), x^{t+1} - x^t \rangle \le -\lambda \eta^t \Delta^t  + 2 \mathcal{K}^t (\eta^t)^2,
\]
and thus 
\[
\Delta^{t+1} \le \Delta^t -\lambda \eta^t \Delta^t + 4 \mathcal{K}^t (\eta^t)^2.
\]

The right hand side is minimized by setting $\eta^t = \frac{\lambda\Delta^{t}}{8\mathcal{K}^t}$, thus $\Delta^{t+1} \le \Delta^t$ and
\[
\Delta^{t+1} \le \Delta^{t}  - \frac{\lambda^2(\Delta^t)^2}{16\mathcal{K}^t} \le \Delta^{t} - \frac{\lambda^2\Delta^t \Delta^{t+1}}{16\mathcal{K}^t}.
\]
The rest of the proof is straightforward.
\end{proof}

\subsection{Proof of Theorem~\ref{thm:3}}
\begin{proof}
Denote $\mathcal{K}^t_\xi := K_0 + K_1 \Delta^t_{\xi} + K_\rho(\Delta^t_{\xi})^{\rho}$ and $\bar{\mathcal{K}} := K_0 + K_1 M + K_\rho M^{\rho}$. By the update rule,
\[
\|x^{t+1}-x^{\star}\|^2 
= \|x^{t}-x^{\star}\|^2  - \frac{2\eta^t}{\|\nabla f_{\xi^t} (x^t)\|} \langle \nabla f_{\xi^t} (x^t), x^t - x^{\star} \rangle + (\eta^t)^2
\]
Using Corollary \ref{cor:star_convex_gradient} and Assumptions \ref{ass:smoothness2_weak}, \ref{ass:overparam}, we have
\[
\|x^{t+1}-x^{\star}\|^2 \leq \|x^{t}-x^{\star}\|^2  - \frac{2\eta^t \Delta_{\xi}^t}{\|\nabla f_{\xi^t} (x^t)\|}  + (\eta^t)^2 \leq \|x^{t}-x^{\star}\|^2  - \frac{2\eta^t \Delta_{\xi}^t}{\mathcal{K}^t_\xi \|x^t-x^{\star}\|}  + (\eta^t)^2.
\] 

One can verify that $\|x^1 - x^{\star}\| \leq \|x^0 - x^{\star}\| =: D$ for $t=0$ almost surely. By induction, $\|x^{t} - x^{\star}\| \leq D$ for all $t \geq 0$ almost surely. Consequently,
\[
\|x^{t+1}-x^{\star}\|^2 
\leq \|x^{t}-x^{\star}\|^2 - \frac{2\eta^t \Delta_{\xi}^t}{D \cdot \mathcal{K}^t_\xi} + (\eta^t)^2 
= \|x^{t}-x^{\star}\|^2 - \frac{(\Delta_{\xi}^t)^2}{D^2 (\mathcal{K}^t_\xi)^2}.
\]
After taking expectation and summing from $t=0$ to $T-1$, we obtain
\[
\mathbb{E}\bigl[\|x^{T}-x^{\star}\|^2\bigr]
\le D^2 - \frac{1}{D^2}\sum_{t=0}^{T-1} \mathbb{E}\left[\frac{(\Delta_{\xi}^t)^2}{(\mathcal{K}^t_\xi)^2}\right].
\]
Since $\mathbb{E}[\|x^{T}-x^{\star}\|^2] \ge 0$, we have $\sum_{t=0}^{T-1} \mathbb{E}[(\Delta_{\xi}^t)^2 / (\mathcal{K}^t_\xi)^2] \le D^4$.

We use the inequality: for real $X$ and positive $Y$ with finite expectations, $\mathbb{E}[X^2/Y] \ge (\mathbb{E}[X])^2 / \mathbb{E}[Y]$. By applying this with $X = \Delta_{\xi}^t$ and $Y = (\mathcal{K}^t_\xi)^2$, we have
\[
\mathbb{E}\left[\frac{(\Delta_{\xi}^t)^2}{(\mathcal{K}^t_\xi)^2}\right]
\ge \frac{(\mathbb{E}[\Delta_{\xi}^t])^2}{\mathbb{E}[(\mathcal{K}^t_\xi)^2]}.
\]
Combining with the Cauchy-Schwarz inequality gives
\[
\frac{\bigl(\sum_{t=0}^{T-1}\mathbb{E}[\Delta_{\xi}^t]\bigr)^2}{\mathbb{E}\bigl[\sum_{t=0}^{T-1} (\mathcal{K}^t_\xi)^2\bigr]} \le D^4.
\]
Dividing by $T^2$ and taking square root yields the general bound:
\begin{equation}
\frac{1}{T}\sum_{t=0}^{T-1}\mathbb{E}[\Delta_{\xi}^t] \le \frac{D^2}{T}\sqrt{\mathbb{E}\Bigl[\sum_{t=0}^{T-1} (\mathcal{K}^t_\xi)^2\Bigr]}.
\end{equation}

If $\Delta_{\xi}^t \le M$ a.s.\ for all $t$, then $(\mathcal{K}^t_\xi)^2 \le \bar{\mathcal{K}}^2$, and thus
\[
\frac{1}{T}\sum_{t=0}^{T-1}\mathbb{E}[\Delta_{\xi}^t] \le \frac{D^2 \bar{\mathcal{K}}}{\sqrt{T}} = \mathcal{O}(T^{-1/2}).
\]
\end{proof}
\newpage
\section{Experimental Details}
\label{app:experimental_details}

\subsection{Model Architecture}

All experiments were conducted on NVIDIA H200 GPUs with 140GB memory. Table~\ref{tab:model_arch} describes the Llama-based architecture used in our experiments. With $n_{\text{layer}} = 12$, this configuration yields a 124M parameter model. Increasing $n_{\text{layer}}$ to 24 produces a 210M parameter model.

\begin{table}[h!]
\centering
\caption{Model architecture hyperparameters. The configuration yields a 124M parameter Llama model.}
\label{tab:model_arch}
\begin{tabular}{lll}
\toprule
\textbf{Parameter} & \textbf{Value} & \textbf{Description} \\
\midrule
\multirow{ 2}{*}{$n_{\text{layer}}$} & 12 (for 124M model) & \multirow{ 2}{*}{Number of transformer layers} \\
& 24 (for 210M model) & \\
$n_{\text{embd}}$ & 768 & Embedding dimension \\
$n_{\text{head}}$ & 12 & Number of attention heads \\
Vocabulary size & 50257 & Size of tokenizer vocabulary \\
\bottomrule
\end{tabular}
\end{table}

\subsection{Assumption Validation Experiments}
\label{appendix:hyper_ass}
Table~\ref{tab:assumption_hyperparams} lists the training hyperparameters used for validating the generalized smoothness assumption (Section~\ref{sec:ass}). The parameters are identical across all three optimizers (signSGD, Muon, normalized SGD) except for the learning rate and the number of training iterations, which are shown in separate columns.

\begin{table}[h!]
\centering
\caption{Training hyperparameters for assumption validation experiments on FineWeb dataset for Figures \ref{fig:smoothness-lion} from Section \ref{sec:ass} and Figure \ref{fig:lips-muon-norm} from Appendix \ref{app:smooth}.}
\label{tab:assumption_hyperparams}
\begin{tabular}{lccc}
\toprule
\textbf{Parameter} & \textbf{signSGD} & \textbf{Muon} & \textbf{normSGD} \\
\midrule
Learning rate ($\mathrm{lr}$) & $10^{-4}$ & $10^{-4}$ & $10^{-3}$ \\
Training iterations & 10k & 15k & 10k \\
\midrule
Batch size & \multicolumn{3}{c}{64} \\
Sequence length & \multicolumn{3}{c}{512} \\
Gradient accumulation steps & \multicolumn{3}{c}{1} \\
Warmup steps & \multicolumn{3}{c}{1000} \\
Decay scheduler & \multicolumn{3}{c}{Cosine} \\
Gradient clipping & \multicolumn{3}{c}{0.5} \\
Weight decay & \multicolumn{3}{c}{0.1} \\
Momentum & \multicolumn{3}{c}{0.9} \\
Divisor ($\mathrm{div}$) & \multicolumn{3}{c}{100} \\
Target loss $f^\star$ & \multicolumn{3}{c}{3.2} \\
Smoothness exponent $\rho$ & \multicolumn{3}{c}{2} \\
\bottomrule
\end{tabular}
\end{table}

The common hyperparameters follow the setup from \cite{semenov2025benchmarking}. The parameters are identical across all three optimizers (signSGD, Muon, normalized SGD) except for the learning rate and the number of training iterations.
The learning rates are chosen to ensure comparable training dynamics across optimizers: signSGD use $\mathrm{lr} = 10^{-4}$, while normalized SGD requires $\mathrm{lr} = 10^{-3}$ due to its smaller effective step size in Frobenius norm (see Section \ref{sec:our_sceduler} for a detailed discussion of geometry-dependent scaling). Muon uses $\mathrm{lr} = 10^{-4}$ to carefully capture the smoothness landscape. The number of training iterations is adjusted to capture sufficient data for fitting the smoothness ratio: Muon requires 15k iterations to reach lower loss values (since it uses quite low $\mathrm{lr}$), while signSGD and normalized SGD converge faster and use 10k iterations.

\subsection{Main Experiments Hyperparameters}
\label{appendix:hyper}
\begin{table}[h!]
\centering
\caption{Training hyperparameters for main experiments on FineWeb dataset for Figures \ref{fig:warmup-sweep-124M} and Figures \ref{fig:warmup-sweep-210M} from Section \ref{sec:exp}.}
\label{tab:main_hyperparams}
\resizebox{\linewidth}{!}{
\begin{tabular}{llcccc}
\toprule
\textbf{Optimizer} & \textbf{Parameter} & \textbf{124M, BS=256} & \textbf{210M, BS=256} & \textbf{124M, BS=32} & \textbf{210M, BS=32} \\
\midrule
\multirow{4}{*}{Muon} 
& Learning rate & $2 \times 10^{-3}$ & $3 \times 10^{-3}$ & $10^{-3}$ & $10^{-3}$ \\
& Target loss $f^\star$ & \multicolumn{4}{c}{3.3 (1B train tokens), 3.2 (2.1 train tokens)} \\
& Momentum ($\beta_1$) & \multicolumn{4}{c}{0.8} \\
& Divisor ($\mathrm{div}$) & \multicolumn{4}{c}{100} \\
\midrule
\multirow{5}{*}{Lion} 
& Learning rate & \multicolumn{4}{c}{$10^{-3}$} \\
& Target loss $f^\star$ & \multicolumn{4}{c}{3.4 (1B train tokens), 3.3 (2.1 train tokens)} \\
& Momentum ($\beta_1$) & \multicolumn{4}{c}{0.9} \\
& Second momentum ($\beta_2$) & \multicolumn{4}{c}{0.99} \\
& Divisor ($\mathrm{div}$) & \multicolumn{4}{c}{100} \\
\midrule
\multirow{4}{*}{normSGD} 
& Learning rate & \multicolumn{4}{c}{$\{ 10^{-4}, 10^{-3}, \mathbf{10^{-2}}, 10^{-1}\}$} \\
& Target loss $f^\star$ & \multicolumn{4}{c}{4.3 (1B train tokens), 4.1 (2.1 train tokens)} \\
& Momentum & \multicolumn{4}{c}{$\{ 0.8, 0.9, \textbf{0.95}, 0.99\}$} \\
& Divisor ($\mathrm{div}$) & \multicolumn{2}{c}{$\{ 2, \textbf{5}, 10, 100 \}$} & \multicolumn{2}{c}{$\{ \textbf{2}, 5, 10, 100 \}$} \\
\midrule
\multicolumn{6}{l}{\textbf{Common hyperparameters (all optimizers)}} \\
\multicolumn{2}{l}{Sequence length} & \multicolumn{4}{c}{512} \\
\multicolumn{2}{l}{Gradient accumulation} & \multicolumn{4}{c}{1} \\
\multicolumn{2}{l}{Decay scheduler} & \multicolumn{4}{c}{Cosine} \\
\multicolumn{2}{l}{Gradient clipping} & \multicolumn{4}{c}{0.5} \\
\multicolumn{2}{l}{Weight decay} & \multicolumn{4}{c}{0.1} \\
\multicolumn{2}{l}{Dropout} & \multicolumn{4}{c}{0.0} \\
\multicolumn{2}{l}{Frobenius variance $\sigma_F^2$} & \multicolumn{4}{c}{$10^{3}$} \\
\bottomrule
\end{tabular}
}
\end{table}

For Muon and Lion optimizers, we use the hyperparameters from \cite{semenov2025benchmarking} without additional tuning. For normalized SGD (normSGD), we perform a hyperparameter sweep over learning rate, momentum, and divisor. The best configurations are highlighted in bold. Interestingly, we observe that the optimal divisor depends on the batch size: $\mathrm{div} = 5$ performs best for $\mathrm{bs} = 256$, while $\mathrm{div} = 2$ is optimal for $\mathrm{bs} = 32$.

\newpage

\subsection{Vision Transformer Experiment Hyperparameters}
\label{appendix:hyper_swin}

\begin{table}[h!]
\centering
\caption{Training hyperparameters for the Swin-B ImageNet-1K experiment (Section~\ref{sec:llm_results}, Figure~\ref{fig:swin}). Bold indicates the selected value.}
\label{tab:swin_hyperparams}
\begin{tabular}{lc}
\toprule
\textbf{Parameter} & \textbf{Value} \\
\midrule
Model & Swin-B (swin\_base\_patch4\_window7\_224) \\
Dataset & ImageNet-1K \\
Optimizer & Muon \\
Learning rate & $\{5 \times 10^{-4},\ \mathbf{10^{-3}},\ 5 \times 10^{-3}\}$ \\
Batch size & 128 \\
Iterations & 50{,}000 \\
Weight decay & 0.1 \\
Decay scheduler & Cosine \\
\midrule
\multicolumn{2}{l}{Adaptive scheduler (ours)} \\
Frobenius variance $\sigma_F^2$ & $10^{-3}$ \\
Smoothness exponent $\rho$ & 2 \\
Target loss $f^\star$ & 0.4 \\
\bottomrule
\end{tabular}
\end{table}

\end{appendixpart}
\end{document}